\newif\ifreview 
\newif\ifarxiv \newcommand{\arxiv}{\arxivtrue}
\newif\ifcamera 
\newif\ifrebuttal 
\ifreview \usepackage[main]{neurips_2026} \fi
\ifarxiv \usepackage[preprint]{neurips_2026} \fi
\ifrebuttal \usepackage[main]{neurips_2026} \fi
\ifcamera \usepackage[main, final]{neurips_2026} \fi
\def\paperTitle{
Turbo-Muon: Almost-Orthogonal Pre-Conditioning for Fast Muon Updates
}
\def\authorBlock{
\hspace{0cm}
\textbf{Thibaut Boissin}$^{1,2,\dagger}$
\hspace{1mm}
\textbf{Thomas Massena}$^{2,3,\dagger}$
\hspace{1mm}
\textbf{Franck Mamalet}$^{1}$
\hspace{1mm}
\textbf{Mathieu Serrurier}$^{2}$
\vspace{3mm}
\\
$^1$ Institut de Recherche Technologique Saint-Exupery, France\\
$^2$ IRIT, France\\
$^3$ Innovation \& Research Division, SNCF, France\\
}
\newtheorem{proposition}{Proposition}
\newtheorem{lemma}{Lemma}
\ifcamera \usepackage[accsupp]{axessibility} \fi
\definecolor{muoncolor}{HTML}{0072B2}
\definecolor{dioncolor}{HTML}{56B4E9}
\definecolor{turbomuoncolor}{HTML}{009E73}
\definecolor{frobcolor}{HTML}{7E7E7E}
\newcommand{\R}[1]{{%
    \textbf{%
        \ifstrequal{#1}{1}{\textcolor{red}{R#1}}{%
        \ifstrequal{#1}{2}{\textcolor{blue}{R#1}}{%
        \ifstrequal{#1}{3}{\textcolor{magenta}{R#1}}{%
        \ifstrequal{#1}{4}{\textcolor{teal}{R#1}}{%
                           \textcolor{cyan}{R#1}%
        }}}}%
    }%
}}
\newcommand{\turbomuon}{Turbo-Muon }
\newif\ifshowcomments
\newcommand{\thomas}[1]{{\color{blue}Thomas:#1}}
\newcommand{\thib}[1]{{\color{magenta}Thib:#1}}
\newcommand{\franck}[1]{{\color{red}Franck:#1}}
\newcommand{\thib}[1]{}
\newcommand{\thomas}[1]{}
\newcommand{\franck}[1]{{}
\fi


\makeatletter
\newcommand*{\addFileDependency}[1]{
  \typeout{(#1)}
  \@addtofilelist{#1}
  \IfFileExists{#1}{}{\typeout{No file #1.}}
}

\makeatother
\newcommand*{\myexternaldocument}[1]{
    \externaldocument{#1}
    \addFileDependency{#1.tex}
    \addFileDependency{#1.aux}
}

\usepackage[capitalize]{cleveref}
\crefname{section}{Sec.}{Secs.}
\crefname{table}{Table}{Tables}
\crefname{figure}{Fig.}{Figs.}
\crefname{algorithm}{Alg.}{Algs.}

\ifarxiv \crefname{appendix}{App.}{Apps.}
\else \crefname{appendix}{Suppl.}{Suppls.} \fi


\usepackage{algorithm}%
\usepackage{algorithmicx}%
\usepackage{algpseudocode}%

\newtheorem*{lemma*}{Lemma}


\title{\paperTitle}
\author{\authorBlock}

\begin{document}
\maketitle

\ifarxiv
    \begingroup
    \renewcommand\thefootnote{\fnsymbol{footnote}}
    \footnotetext[2]{Core contributors.}
    \endgroup
\fi

\begin{abstract}
Orthogonality-based optimizers, such as Muon, have recently shown strong performance across large-scale training and community-driven efficiency challenges. However, these methods rely on a costly gradient orthogonalization step. Even efficient iterative approximations such as Newton-Schulz remain expensive, typically requiring dozens of matrix multiplications to converge.

We introduce a pre-conditioning procedure that improves the initialization of the Newton–Schulz iterations while incurring negligible overhead.
Furthermore, our pre-conditioning reduces the initial polar error and enables the removal of one Newton-Schulz iteration (out of the five iterations usually used in practice). 
The resulting implementation significantly reduces Muon’s overhead.

At the end-to-end training level, we observe consistent runtime improvements across speed-run and standard benchmarks, including $\sim3\%$ reductions in training time
 on multiple fast training benchmarks, while matching reference performance on both language and vision tasks. Crucially, these improvements require no hyperparameter tuning and can be adopted as a simple drop-in replacement.

Beyond empirical gains, we provide theoretical insight into the geometry of the update and its potential robustness against feature collapse.
Our code is publicly available on 
\ifarxiv \href{https://github.com/thib-s/flash-newton-schulz/}{github}, in \href{https://optax.readthedocs.io/en/latest/api/generated/optax.contrib.muon.html}{optax} and \href{https://huggingface.co/tboissin/newton_schulz_triton}{huggingface kernels}.
\else \href{https://anonymous.4open.science/r/flash-newton-schulz-D67B/}{anonymous-github}.
\fi
\end{abstract}


\section{Introduction}

Optimization algorithms are a central component of modern deep learning, directly affecting training efficiency, stability, and final model performance. Most widely used methods build on stochastic gradient descent, either directly through momentum-based variants or through adaptive extensions such as Adam and AdamW~\cite{kingma2014adam,loshchilovdecoupled}. More recently, orthogonalization of weight updates has become a central ingredient in several optimizers~\cite{bernstein2024oldoptimizernewnorm, bernstein2024modulardualitydeeplearning}.
The most prominent example is the Muon optimizer~\cite{jordan2024muon}, which has been shown to consistently surpass AdamW~\cite{kingma2014adam,loshchilovdecoupled} across diverse training regimes~\cite{wen_fantastic_2025} and has been adopted in large foundation models such as Kimi-K2 and GLM-4.5~\cite{kimi2025k2,zeng2025glm}. 
Recent large-scale evaluations further report favorable scaling of Muon for LLM training~\cite{liu2025muonscalablellmtraining, shah2025practical}. 
In Muon and its variants, updates are projected toward the orthogonal manifold to
 accelerate convergence, stabilize training, and support robust hyperparameter transfer across model scales~\cite{bernstein2025deriving, bernstein2024modulardualitydeeplearning, large2024scalable, pethick2025training}.

However, the computational cost of this projection remains the primary barrier to broad adoption at scale. Exact orthogonalization via SVD, while numerically precise, is impractical on modern accelerators due to its high cost and instability for large matrices in low precision.
In practice, state-of-the-art methods therefore rely on efficient but imprecise iterative schemes such as the Newton-Schulz method (abbreviated NS and also known as Bj\"orck)~\cite{bjorck1971iterative, cesista2025muonoptcoeffs}.

In these methods, wall-clock overhead scales with the number of iterative steps needed to achieve a target polar error (i.e., the Frobenius distance to the closest orthogonal matrix), creating a direct 
trade-off 
between runtime and optimization quality~\cite{shulgin2025beyond}. 
This explains why Newton-Schulz is usually used with only five iterations. This work focuses on improving this specific, low-iteration regime, which is most relevant to deep-learning practitioners.

\textbf{This paper.} 
We introduce a pre-conditioned Newton-Schulz method based on Almost Orthogonal Layer (AOL) pre-conditioning~\cite{prach2022almost}. We show that this pre-conditioning yields a significantly improved initialization of the Newton-Schulz iteration toward the polar factor, enabling the removal of one Newton-Schulz iteration. The pre-conditioning incurs negligible overhead by reusing existing computations, leading to a better trade-off between runtime and orthogonalization quality (\cref{fig:pareto8192}).
Crucially, we prove that the resulting update remains a steepest descent direction under a modified norm.

Empirically, \turbomuon acts as a drop-in replacement for Muon~\cite{jordan2024muon}, consistently reducing training time by up to 4\% while preserving downstream performance across diverse settings, including highly optimized speed-run benchmarks on NanoGPT and CIFAR-10 as well as the large-scale ImageNet-1K benchmark.

\section{Related Work}
\label{sec:related}

\paragraph{Orthogonality-based optimizers} form a recent class of training algorithms that enforce or approximate orthogonal updates to improve conditioning and stability during deep network optimization. The foundational work of Muon \cite{jordan2024muon} introduced the idea of orthogonalizing parameter updates via iterative matrix normalization, such as the Newton-Schulz method, yielding isotropic update directions and smoother convergence dynamics. Follow-up works such as Deriving Muon \cite{bernstein2025deriving} and Old Optimizer, New Norm \cite{bernstein2024oldoptimizernewnorm} offered theoretical insights linking these methods to optimization under alternative geometries and norms. Scalable implementations such as Dion \cite{ahn_dion_2025} extended the approach to distributed settings, while Gluon \cite{riabinin_gluon_2025} and AdaMuon \cite{si_adamuon_2025} incorporated adaptivity and layer-wise refinements, bridging the gap between theoretical linear minimization oracle (LMO) frameworks and practical large-model training~\cite{pethick2025training}. Empirical studies \cite{modded_nanogpt_2024, wen_fantastic_2025, kimi2025k2} further demonstrated that such orthogonalization can accelerate convergence, stabilize training under heavy-tailed gradients, and enhance performance across architectures from GPT-2 scale to billion-parameter models. Together, these works position orthogonality-based optimization as a promising direction for efficient and stable large-scale learning.

\paragraph{Orthogonalization methods.} 
A variety of orthogonalization schemes have been proposed to construct weight matrices with orthogonal constraints.
The Modified Gram-Schmidt QR factorization~\citep{LaPlace1820} finds the polar factor with an iterative process (one iteration per row). 
Alternatively, the Cayley transform~\citep{Cayley_1846} establishes a bijection between skew-symmetric and orthogonal matrices through $(I-A)(I+A)^{-1}$, but requires a costly matrix inversion.
The exponential map~\citep{singla_skew_2021} also leverages skew-symmetric matrices, generating $Q=\exp(A)$ while typically approximating the exponential via truncated series. 
Similarly, The Cholesky-based method~\citep{hu_recipe_2023} orthogonalizes a matrix with a triangular decomposition $MM^\top=L L^\top$ (where $L$ is triangular) and then solves for $L^{-1}M$, offering efficiency when numerical stability is maintained. 
Finally, the iterative Newton-Schulz algorithm~\citep{bjorck1971iterative,anil_sorting_2019} (also known as the Bj\"orck--Bowie algorithm) projects matrices toward the Stiefel manifold via the computation of matrix polynomials.
This iteration achieves fast convergence when the input matrix is spectrally normalized.
Recent works~\cite{jordan2024muon, cesista2025muonoptcoeffs} extended this algorithm with fifth-order polynomials, achieving faster convergence.
While these methods provide complementary trade-offs between accuracy, stability, and computational cost, Newton-Schulz variants have gained substantial popularity due to their scalability, making these operations feasible even at large scales~\cite {kimi2025k2}.

\section{Background and Motivation}
\label{sec:bgnd_motiv}

\begin{figure*}[t]
    \centering
    \begin{subfigure}{0.45\textwidth}
        \centering
        \includegraphics[width=0.99\linewidth]{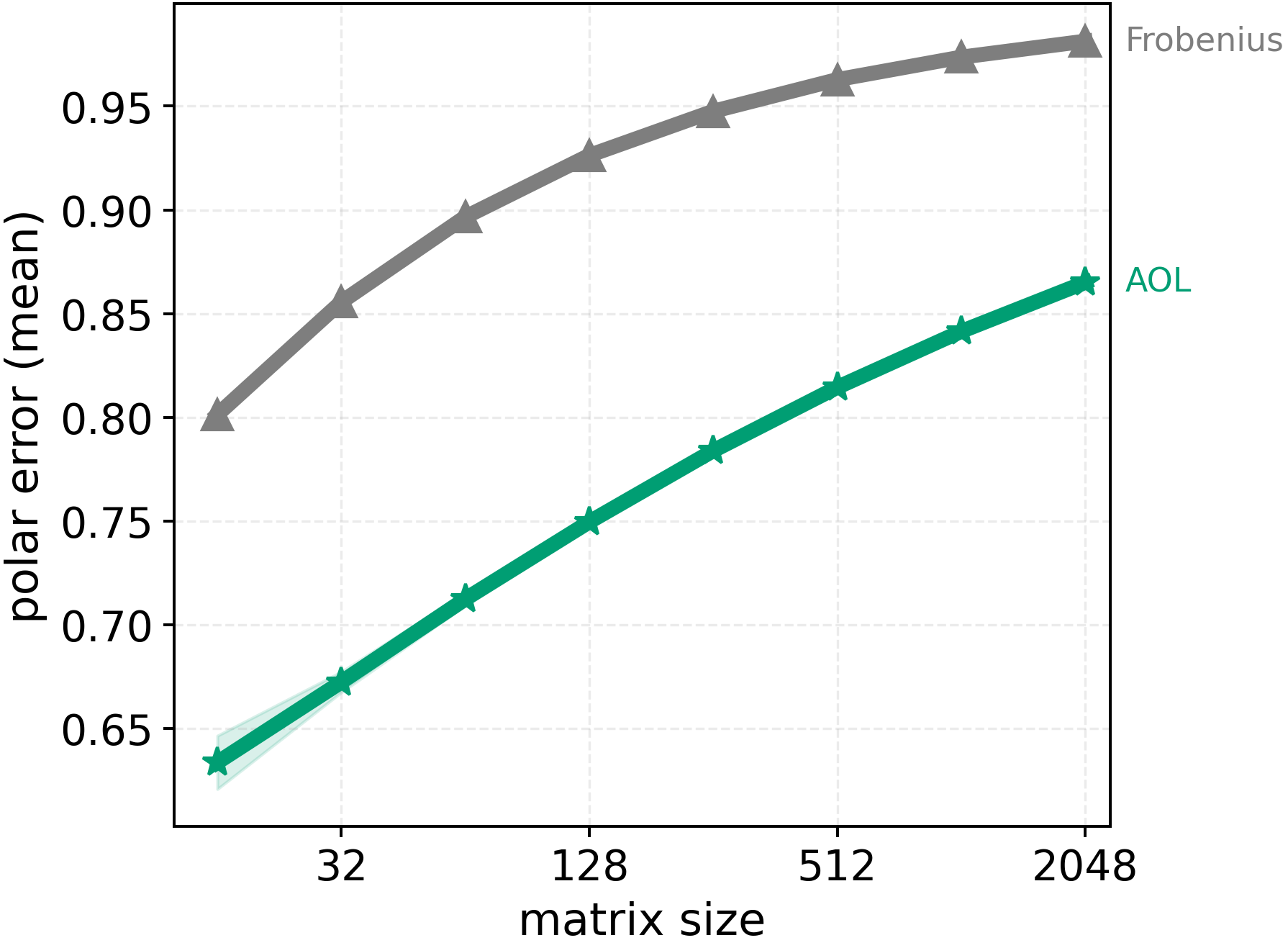}
        \caption{\textbf{Comparing pre-conditioning methods:} While normalization is primarily used for numerical stability, AOL also reduces the polar error of the initial iterate of Newton-Schulz.
        }
        \label{fig:precond_bench}
        \vspace{4mm}
    \end{subfigure}
    \hfill
    \begin{subfigure}{0.49\textwidth}
        \centering
        \includegraphics[width=0.99\linewidth]{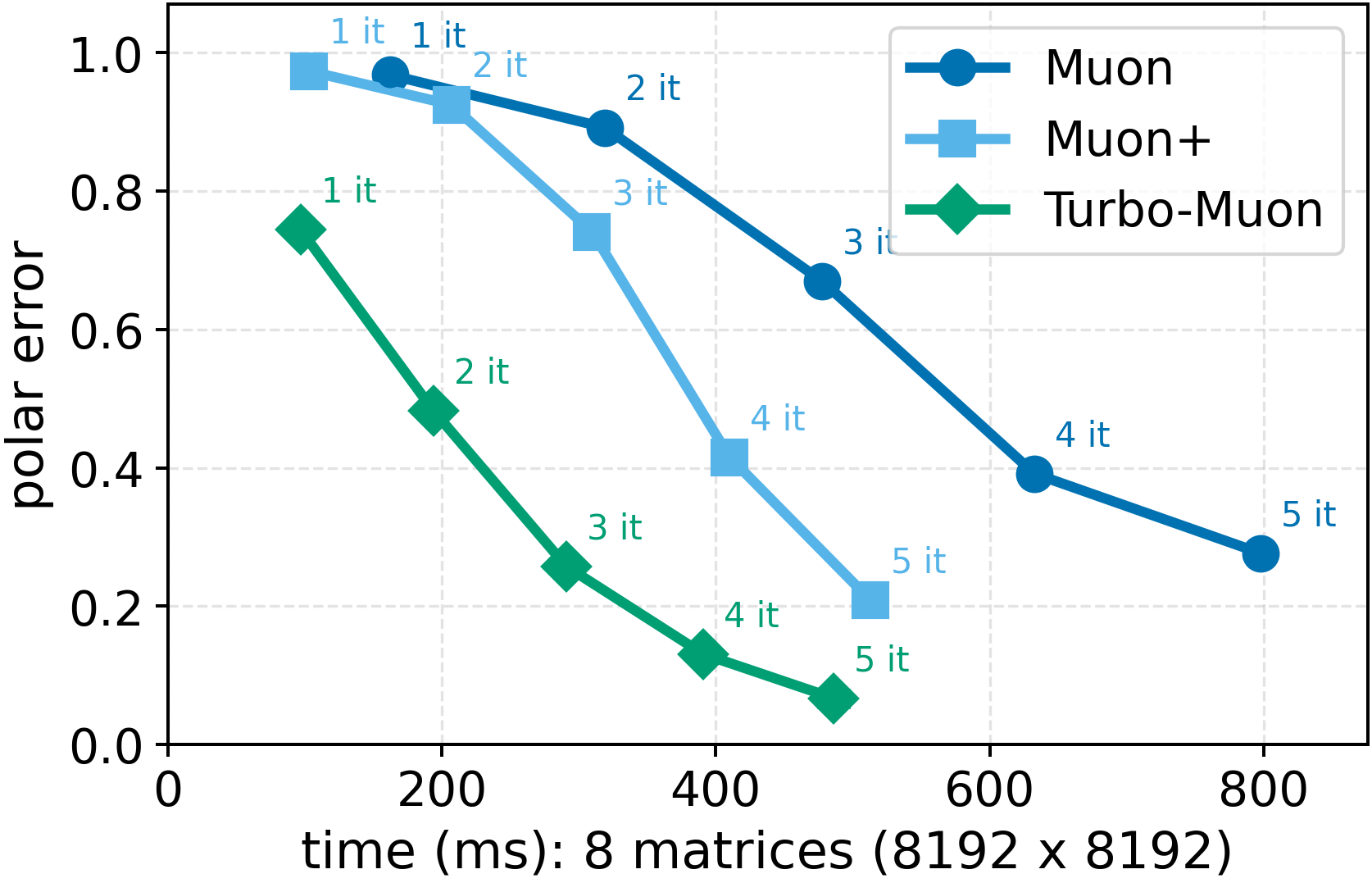}
        \caption{\textbf{Practical implementations of orthogonalization face a trade-off between polar error and computation time}. Thanks to pre-conditioning, our method reduces the polar error of the Newton-Schulz algorithm. This improves the trade-off between convergence and runtime.}
        \label{fig:pareto8192}
    \end{subfigure}
    \caption{\textbf{Our approach} consists of accelerating the core computation of the Muon optimizer: the Newton-Schulz algorithm. With adequate pre-conditioning (\ref{fig:precond_bench}), this routine can achieve lower polar error in less time, translating into a more efficient optimization, either in terms of performance (fewer steps) or runtime (lower time per step).}
\end{figure*}

\paragraph{Definitions.}
    Let $\mathbf{X} \in \mathbb{R}^{n \times n}$ denote a square matrix (all results generalize to the non-square case). 
    The singular value decomposition (SVD) of $\mathbf{X}$ is given by
    \[
        \mathbf{X} = \mathbf{U} \boldsymbol{\Sigma} \mathbf{V}^\top,
    \]
    where $\mathbf{U}, \mathbf{V} \in \mathbb{R}^{n \times n}$ are orthogonal matrices and 
    $\boldsymbol{\Sigma} = \mathrm{diag}(\sigma_1, \ldots, \sigma_n)$ is a diagonal matrix containing the singular values 
    $\sigma_1 \ge \cdots \ge \sigma_n \ge 0$. This also defines the spectral norm of a matrix: $ \| X \|_2 = \sigma_1$.
    
    A matrix $\mathbf{X}$ is said to be \textit{orthogonal} if it satisfies
        $\mathbf{X}^\top \mathbf{X} = \mathbf{X} \mathbf{X}^\top = \mathbf{I}$.
    For non-square matrices, only one of these two equalities can hold, in which case the matrix is referred to as 
    \textit{semi-orthogonal}.  
    To 
    assess
    deviations from orthogonality, we define the \textit{orthogonality error} as
        $\varepsilon_{\text{ortho}}(\mathbf{X}) = \|\mathbf{X}^\top \mathbf{X} - \mathbf{I}\|_F$.
    
    In the context of this paper, we seek the closest orthogonal matrix to a given matrix $\mathbf{X}$, 
    with respect to the Frobenius norm. 
    This matrix is the \textit{polar factor} ($\text{PolarFactor} (\mathbf{X})$), and, for any other matrix $Z$, we assess the approximation quality by the \textit{polar error} ($\varepsilon_{\text{polar}}(Z, \mathbf{X})$):
    \[
        \text{PolarFactor} (\mathbf{X}) = \mathbf{Q} = \mathbf{U}\mathbf{V}^\top \quad \text{and} \quad  \varepsilon_{\text{polar}}(Z, \mathbf{X}) = 
        \frac{\|Z  - \mathbf{Q}\|_F}{\sqrt{n}}.
    \]
    These metrics will be used throughout the paper to evaluate the orthogonalization accuracy of the $t^{th}$ iteration $\operatorname{NS}_t(\mathbf{X})$ of the Newton-Schulz algorithm.

\paragraph{Existing methods: strengths and limitations.}
The original \textit{Muon} formulation introduced the Newton-Schulz (NS) variant of the Björck iteration, which relies on quintic polynomial expansions to approximate the polar factor efficiently~\cite{jordan2024muon}. Building on this, \cite{cesista2025muonoptcoeffs} proposed iteration-dependent polynomial coefficients, allowing convergence within five or six steps--a significant improvement over the fixed-coefficient scheme. This idea was later extended and accelerated by~\cite{grishina_accelerating_2025} and \cite{amsel2025polar}, who analyzed optimal polynomial families and convergence regimes for orthogonalization.

In parallel, \cite{ahn_dion_2025} introduced \textit{Dion}, a distributed implementation supporting model sharding (i.e., splitting a large neural network model into smaller parts) and tensor-parallel decomposition. In addition to their optimizer, they provide an efficient implementation of Muon featuring Triton kernels~\cite{tillet2019triton} that efficiently exploit the matrix’s symmetric structure to reduce redundant computation and memory access. While these contributions collectively enhance the scalability of orthogonality-based optimizers, the computational cost of orthogonalization remains a limiting factor. 
In practice, Newton-Schulz iterations require dense matrix–matrix products, which interacts nontrivially with sharded training, leading to compute and communication bottlenecks in extreme-scale settings~\cite{essential2025muon}. Hence, each reduction in cost directly broadens the applicability of orthogonality-based optimizers to increasingly large models.

\paragraph{Newton-Schulz iteration}

To circumvent the limitations of exact methods, practical applications often rely on approximate methods. Notably, \cite{bjorck1971iterative} demonstrated that the following scheme:
\begin{align}
\label{eq:bjork}
    X_{k+1} = (1- \beta) X_k -  \beta X_k X_k^\top X_k
\end{align}
converges if $\|X_0\|_2\leq 1$ (in spectral norm) for $\beta \in \left[0, 0.5\right]$. More recently, \cite{jordan2024muon} introduced a variant that uses $5^{th}$-order polynomials:
\begin{align}
\label{eq:ns_quintic}
    X_{k+1} = a X_k + b X_k X_k^\top X_k + c X_k X_k^\top X_k X_k^\top X_k
\end{align}
which is computed in practice in three steps described in \cref{alg:muon} lines 3, 4, and 5.
This formulation opens the door to new optimizations: \cite{cesista2025muonoptcoeffs} showed that $(a,b,c)$ coefficients can be changed across iterations, using $(a_k, b_k, c_k)$ instead of a constant $(a,b,c)$. \cite{amsel2025polar} and \cite{grishina_accelerating_2025} both explored the convergence of these algorithms alongside methods to find efficient polynomial factors. Notably, they observed that the most suitable parameters rely on assumptions about the smallest singular values of $X$.
Throughout this paper and in our experiments, we compare to two widely used variants of the Newton-Schulz algorithm\footnote{Our experiments used the code from \href{https://github.com/KellerJordan/Muon}{github.com/KellerJordan/Muon} and \href{https://github.com/microsoft/dion}{github.com/microsoft/dion} for baseline evaluation.}:

\begin{itemize}
    \item \textbf{Muon}~\cite{jordan2024muon}: the original Newton-Schulz implementation, written in plain PyTorch. It typically uses five iterations, with constant polynomial factors.
    \item \textbf{Muon+}~\cite{ahn_dion_2025}: a more efficient implementation, integrating Triton kernels~\cite{tillet2019triton} proposed in \cite{ahn_dion_2025} and adaptive polynomial factors from~\cite{cesista2025muonoptcoeffs}, computed for five iterations. \footnote{Dion is a distributed variant of Muon (with non-trivial modifications), we only compare here to their implementation of the Newton-Schulz algorithm to keep results comparable.}
    \item \textbf{\turbomuon}: our approach, combining a pre-conditioning method with an additional fused Triton kernel to further reduce runtime and maintain high numerical precision. \turbomuon uses four iterations by default, and uses polynomial factors inherited from Muon+.
\end{itemize}
This selection allows us to observe the impact of theoretical improvements (such as adaptive polynomial factors) and design choices (such as Triton kernels). Other approaches such as \cite{grishina_accelerating_2025, amsel2025polar} are discussed in \cref{ap:polar_params}.

\section{Methodology
}

\begin{figure*}[t]
    \begin{minipage}[t]{0.45\textwidth}
        \begin{algorithm}[H]
            \caption{Newton-Schulz \color{muoncolor}Muon\color{black}/\color{dioncolor}Muon+\color{black}}
            \label{alg:muon}
            \begin{algorithmic}[1]
    \Require Initial point: $\mathbf{X_0} \in \mathbb{R}^{n\times n},(a,b,c)$  NS factors 
    \Statex
    \State \color{frobcolor}$s=1/\|\mathbf{X_0}\|_F$\color{black} \Comment{Frobenius scaling factor}
    \State $\mathbf{X_1}=\mathbf{X_0}s$  \Comment{k=1}
    \State $\mathbf{A_k}=\mathbf{X_k}^\top \mathbf{X_k} $\Comment{polynomial 1}
    \State $\mathbf{B_k}=b\mathbf{A_k}+ c \mathbf{A_k} \mathbf{A_k}$\Comment{polynomial 2}
    \State $\mathbf{X_{k+1}}=a\mathbf{X_k}+ \mathbf{X_k}\mathbf{B_k}$\Comment{polynomial 3}
    \Statex   ...\Comment {Iteration $k+1$ (no rescaling, step 3)} 
    \end{algorithmic}
        \end{algorithm}
    \end{minipage}
    \hfill
    \begin{minipage}[t]{0.54\textwidth}
        \begin{algorithm}[H]
            \caption{Newton-Schulz  \color{turbomuoncolor}\turbomuon\color{black}}
            \label{alg:turbomuon}
            \begin{algorithmic}[1]
    \Require Initial point: $\mathbf{X_0} \in \mathbb{R}^{n\times n},(a,b,c)$  NS factors
    \State $\mathbf{A_0}=  \mathbf{X_0}^\top \mathbf{X_0} $ \Comment{initial matmul}
    \State \color{turbomuoncolor}$s=(1/\sqrt{\|\mathbf{~|A_0}_i|~\|_1})_{i\in[0,n-1]}$\color{black}\Comment{AOL vector} 
    \State $\mathbf{X_1}=\mathbf{X_0}s$\Comment{rescaling $\mathbf{X_1}$} 
    \State \color{turbomuoncolor}$\mathbf{A_1}=s^\top \mathbf{A_0} s$\color{black} \Comment{$A_1$ without recomputation}
    \State $\mathbf{B_1}=b\mathbf{A_1}+ c \mathbf{A_1} \mathbf{A_1}$\Comment{polynomial 2}
    \State $\mathbf{X_2}=a\mathbf{X_1}+ \mathbf{X_1}\mathbf{B_1}$ \Comment{polynomial 3, k= 1}
    \Statex   ...\Comment {\color{muoncolor}Muon\color{black}~iteration $k+1$ (\cref{alg:muon} step 3)} 
            \end{algorithmic}
        \end{algorithm}
    \end{minipage}
    \caption{Our approach changes the normalization step in the first iteration of the Newton-Schulz algorithm. This allows for a faster convergence at a negligible cost since $A_0$ is reused in the first iteration. This approach does not impact memory consumption, which still maintains 3 buffers at most.}
\end{figure*}

Existing improvements of Newton-Schulz focus either on the polynomial steps applied~\cite{grishina_accelerating_2025, amsel2025polar}, or on efficient implementations of the algorithm~\cite{ahn_dion_2025}. However, all practical implementations start with an overlooked step: normalizing the input matrix by its Frobenius norm. This ensures that the Newton-Schulz iterative scheme converges~\cite{bjorck1971iterative}. In this work, we explore an alternative normalization scheme and observe significant improvements in the convergence of the first Newton-Schulz iterations. Moreover, this better performance allows the removal of a Newton-Schulz iteration without loss of precision.

\subsection{Almost Orthogonal Pre-conditioning}\label{ssec:AOLprecond}

To ensure $\|X_0\|_2\leq 1$, existing implementations perform an initial normalization step 
$X_0/\|X_0\|_F$
. While this guarantees convergence, in practice it often leads to $\|X_0\|_2\ll 1$ and does not modify the condition number, since the ratio $\sigma_{\max} / \sigma_{\min}$ remains unchanged.

In contrast, the ``Almost Orthogonal Layer" (AOL) parametrization introduced by \cite{prach2022almost} provides a lightweight alternative to normalization that simultaneously enforces $\|X_0\|_2\leq 1$ while improving the resulting matrix conditioning. In the following, we define: 

\begin{equation}
\text{AOL}(X_0) = X_0.\text{diag}\left(\sum_j |X_0^\top X_0|_{(i,j)}\right)^{-1/2} 
\label{eq:aol}
\end{equation}

where the inverse square root of the row sum of the Gram matrix acts as a scaling vector applied column-wise to $X_0$. This brings the matrix closer to an orthogonal form. In \cref{fig:precond_bench}, we compare the normalized polar error for the two normalization methods when applied to matrices sampled from a normal distribution (sizes ranging from 16 to 2048, with 32 matrices for each matrix size and algorithm).
As shown by~\citet{prach2022almost}, AOL converges to the polar factor if the matrix is near column-orthogonal and is of full rank. Importantly, this phenomenon can occur naturally on random high-dimensional matrices, as a byproduct of concentration phenomena which tend to enforce near-orthogonality (see Section 3.2.4 of \cite{vershynin2009high}). We also compare AOL with an alternative preconditioning strategy in \cref{ap:other_preconditioning}, further supporting its effectiveness in this setting.

\paragraph{Pre-conditioning for free.}
Standard implementations of the Newton-Schulz algorithm normalize the input matrix $X_0$ by its Frobenius norm up front to guarantee convergence (\cref{alg:muon} lines 1-2).
\turbomuon instead defers this normalization until after forming the Gram matrix $A_0 = X_0^\top X_0$ (\cref{alg:turbomuon} line 1). 
The AOL scaling vector $s$ is then computed directly from the row sums of $|A_0|$ via a cheap element-wise inverse square root (line 2).
The initialization of the Newton-Schulz states is performed by
applying this scaling vector via broadcasting ($A_1 = s^\top A_0 s$ and $X_1 = X_0 s$, \cref{alg:turbomuon} lines 3-4). 
Crucially, this reuses the Gram computation and avoids any additional matrix multiplications. AOL pre-conditioning thus incurs negligible runtime overhead, and only $\mathcal{O}(\min(m,n))$ extra memory.

\subsection{Benefits of AOL pre-conditioning for Newton-Schulz convergence}
\label{ssec:AOLxNS}

\begin{figure*}[t]
    \centering
    \begin{subfigure}{0.49\textwidth}
        \centering
        \includegraphics[width=\textwidth]{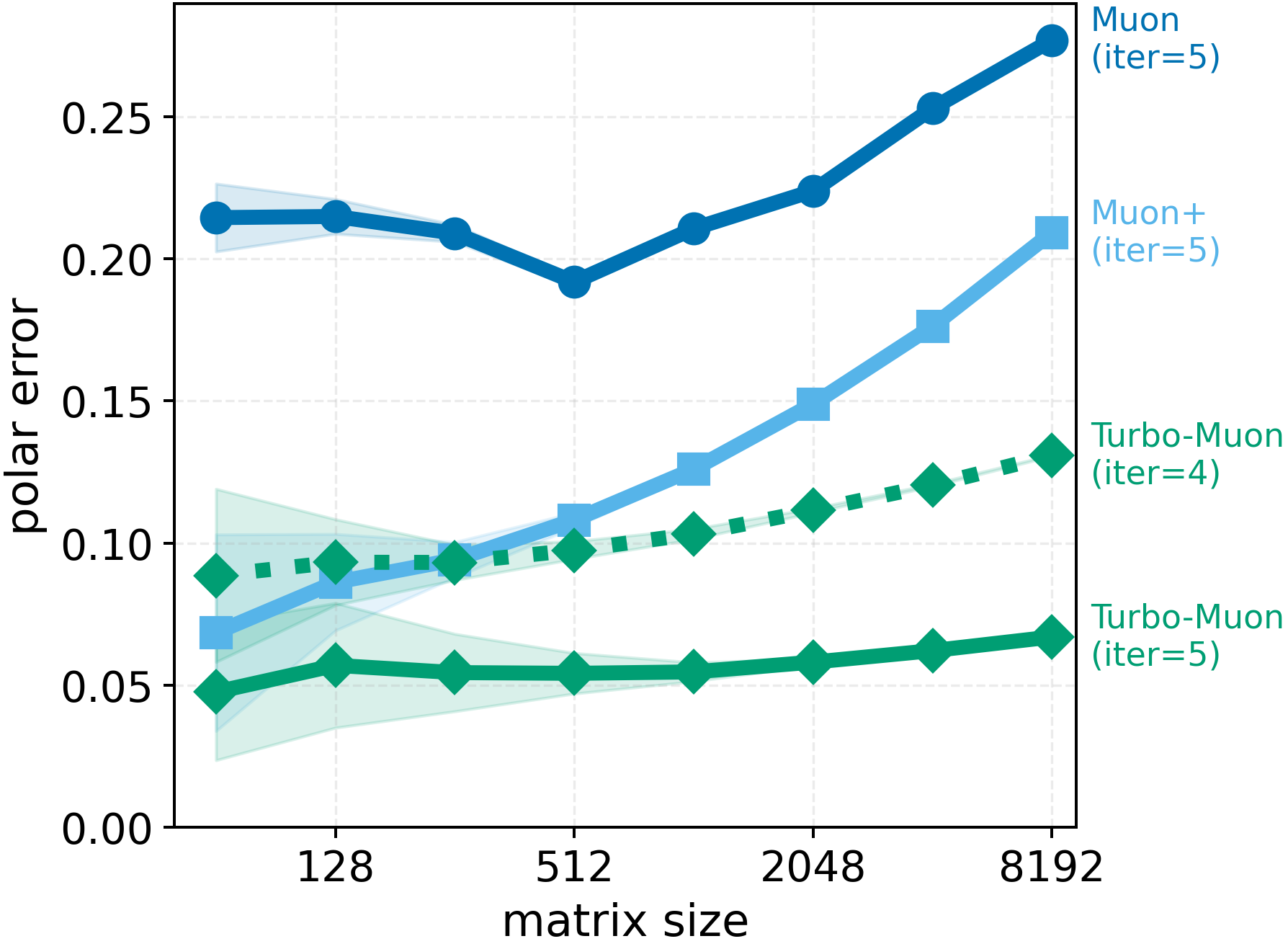}
        \caption{\textbf{Pre-conditioning is effective for large matrices:} Applying AOL before the algorithm improves convergence speed, especially for large matrices. In this context, an iteration can be removed while still achieving improved convergence.}
        \label{fig:polar_error_NS}
    \end{subfigure}
    \hfill
    \begin{subfigure}{0.49\textwidth}
        \centering
        \includegraphics[width=\textwidth]{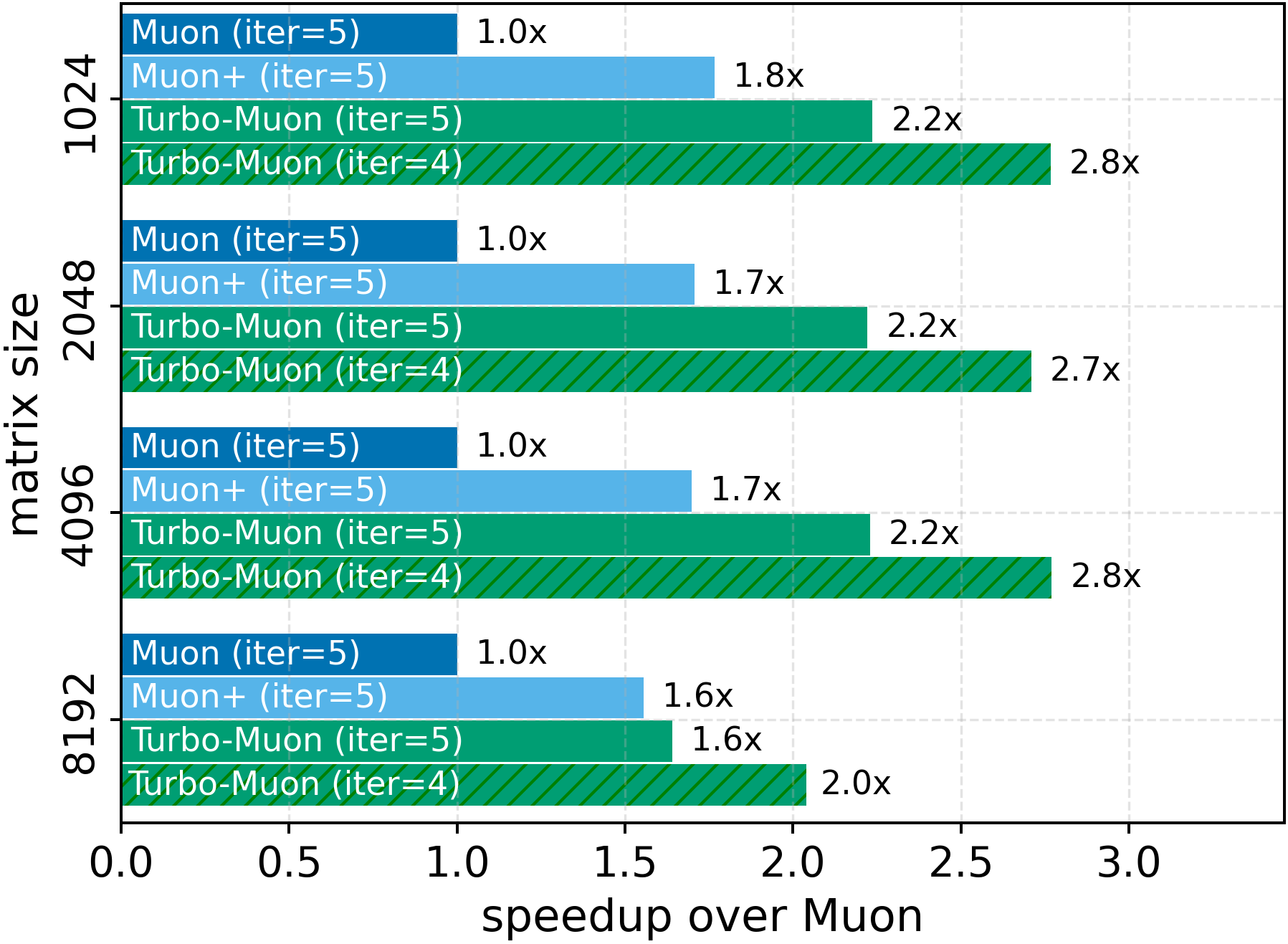}
        \caption{\textbf{Breakdown of our speedup:} Building atop Muon+, which includes Triton kernels and dynamic polynomial parameters ($1.7\times$ faster), our approach adds an extra Triton kernel that unlocks moderate gains ($2.2\times$ faster). Finally, removing one of the five iterations also improves runtime ($2.8\times$ faster).}
        \label{fig:runtime_NS}
    \end{subfigure}
    \caption{\textbf{Turning pre-conditioning into runtime:} Applying AOL before the algorithm improves its convergence speed (\textbf{a}). This can be used to remove an iteration while achieving a similar polar error. Removing one of the five iterations improves the runtime
    (\textbf{b}), making optimizers like Muon more scalable to large matrices.
    }
    \label{fig:pre-conditioning_time}
\end{figure*}

The observation made in \cref{ssec:AOLprecond} and \cref{fig:precond_bench} motivates the use of AOL as a pre-conditioner with the goal of removing one iteration from the Newton-Schulz scheme. Our objective is to determine whether iterative approaches to estimating the polar factor exhibit improved performance when preceded by AOL pre-conditioning in the Muon setting.
In this section, we conduct empirical studies on the effects of AOL pre-conditioning on the convergence of the \turbomuon algorithm toward the ideal polar factor. Our study shows that under widely adopted practical constraints (i.e., using five iterations), our approach yields better approximations of the polar factor on a variety of random matrices.

\paragraph{Adequate pre-conditioning reduces the polar error.} 
In \cref{fig:polar_error_NS}, for each matrix size, we sampled random matrices from a standard normal distribution\footnote{Experiments in \cref{ap:real_gradients} show that our results hold for realistic gradient distributions.} and compared the three  implementations listed in \cref{sec:bgnd_motiv} on the same samples.
We estimate confidence intervals on the polar error (with 32 samples) 
using the singular value decomposition applied to the non-conditioned matrix (as defined in \cref{sec:bgnd_motiv}). It first 
shows how the dynamic polynomial parameters from \cite{ahn_dion_2025} reduce the polar error from the baseline of \cite{jordan2024muon}. Additionally, AOL pre-conditioning (\textit{Turbo-Muon iter=5} in \cref{fig:polar_error_NS}) further reduces the polar error significantly. 
Importantly, our approach lowers the polar error enough to allow the removal of one iteration
(\textit{Turbo-Muon iter=4}), while still achieving a competitive or improved polar error. 

\paragraph{Removing one iteration and its impact on runtime.}
The default Newton-Schulz setting in Muon uses five iterations. Importantly, we have shown that we can leverage AOL pre-conditioning to remove a Newton-Schulz iteration without losing out on approximation quality relative to Muon and Muon+. Indeed, removing a single iteration reduces the total computational cost by $\approx 20\%$. This becomes particularly advantageous with large matrices, where each iteration follows a time complexity of $\mathcal{O}(n^3)$.
Alongside this cost reduction, we leverage the Triton kernels proposed in \cite{ahn_dion_2025, lin2025flash}, which improves computation time by using the symmetric nature of the operations used in 
\cref{alg:muon} lines 3 and 4:
since $X^\top X$ is symmetric, only one triangular part needs to be computed, effectively saving half of the required computation. However, 
\cref{alg:muon} line 5
uses a PyTorch operator that induces a duplicate load of $X$, while this has a small impact on the overall runtime, fixing this issue with a dedicated Triton kernel has a multiplicative effect with the removal of one iteration, leading to increased speedups.

\paragraph{Stress-testing on gradient-like matrices.} The convergence experiments above with normal distribution matrices may not fully reflect the statistics of gradients encountered during neural network training. In particular, empirical studies by~\citet{Simsekli2019TailIndex} have shown that stochastic gradients often exhibit heavy-tailed behavior, with tail indices typically ranging from $1.0$ to $1.8$ in classification networks, and \citep{kunstner2024heavy} shows similar phenomena in language transformers. 
In \cref{ap:real_gradients}, we conduct additional experiments on matrices sampled from a heavy-tailed Levy distribution, and show that the conclusions of our empirical convergence experiments remain valid in these settings.

\subsection{Ablation study}

Since the proposed method relies on pre-conditioning the gradient matrix, and experiments are conducted using the optimized version \color{dioncolor}{Muon+}\color{black}—which integrates Triton kernels~\cite{tillet2019triton} and adaptive polynomial factors from~\cite{cesista2025muonoptcoeffs}—we provide in \cref{tab:ablation_4096} an ablation study that separates the contributions of each component. The results show that the proposed pre-conditioning yields both a significant reduction in polar error (0.06 compared to 0.17) and a substantial runtime speedup (up to 2.83).

    \begin{table}[t]
        \centering
        \begin{tabular}{l l l l c}
            \toprule
            Variant & Source & Polar error & Runtime & Cumulative speedup\\
            \midrule
            \color{muoncolor}{Muon:}\color{black}~baseline  (5it) & \cite{jordan2024muon}& 0.25 & 130 ms & 1.0\\
            + Dynamic coeffs & \cite{cesista2025muonoptcoeffs}  & 0.17 & unchanged & 1.0\\
            \color{dioncolor}{Muon+:}\color{black}~+ Triton kernel& \cite{ahn_dion_2025} & unchanged & 77 ms  & 1.69 \\
            + AOL & Ours & 0.06 & unchanged $^\ast$ & 1.69\\ 
            + 3rd Triton kernel & Ours & unchanged & 59 ms & 2.20\\
            \color{turbomuoncolor}{\turbomuon:}\color{black}~-- 1 iter. (4it) & Ours & 0.12 & 46 ms & 2.83 \\
            \bottomrule
        \end{tabular}
        \vspace{3mm}
        \caption{\textbf{Ablation summary}. We report polar error and runtime improvement for each component, using the same protocol as \cref{fig:pareto8192} and \cref{fig:polar_error_NS} with matrix size set to  4096. $^\ast$AOL overhead is negligible for large matrices: matrix--vector multiplication is $4096\times$ cheaper than matrix--matrix multiplication.}
        \label{tab:ablation_4096}
    \end{table}

\section{Theoretical Analysis}
\label{sec:steepest}

Although the previous section shows that, in the practical regime of fewer than five iterations, the proposed method achieves lower polar error (relative to the true projection of the gradients) with reduced runtime, the introduced pre-conditioning modifies the asymptotic target of the Newton-Schulz iterations. In particular, as the number of iterations increases, the method converges to $\operatorname{PolarFactor}(AOL(G))$ rather than $\operatorname{PolarFactor}(G)$ (see \cref{ap:bias_analysis} for a detailed analysis of this residual error across iterations).

We build on the work of \citet{bernstein2024oldoptimizernewnorm}, who show that several optimizers can be interpreted as first-order steepest descent methods under a particular norm. Our analysis, detailed in \cref{ap:proof_convergence}, shows that AOL pre-conditioning leads to a problem close to a steepest descent under 
an induced norm determined by the AOL rescaling.
However, this norm is dependent on $S$, and its impact on the global optimization process remains unclear. Using only the general properties of the matrix $S$, we will answer the following question: if pre-conditioning induces the steepest descent in a different norm, can this bias the global optimization process?

        \begin{lemma}
        \label{lem:alignment}
            Let $S$ be a diagonal matrix of strictly positive and bounded entries, we have that $\forall \ G \in \mathbb{R}^{m \times n}_{\setminus 0}$:
        
            \begin{equation}
                \langle G, \operatorname{PolarFactor}(GS) \rangle > 0.
            \end{equation}
        
            \noindent Meaning that the projected update recovered by \turbomuon is always ``aligned'' with the raw gradient $G$. Thus, it yields a strict descent direction.
        \end{lemma}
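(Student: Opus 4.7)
The plan is to reduce the inner product to a trace expression whose positivity is manifest, using the compact SVD of $M := G.S$. Since $S$ has strictly positive diagonal, it is invertible, and $G = M.S^{-1}$; since $G \neq 0$ we also have $M \neq 0$, so $M$ admits a compact SVD $M = U.\Sigma.V^\top$ with $U \in \mathbb{R}^{m \times r}$, $V \in \mathbb{R}^{n \times r}$, $U^\top U = V^\top V = I_r$ and $\Sigma \in \mathbb{R}^{r \times r}$ diagonal with strictly positive entries ($r = \operatorname{rank}(M) \geq 1$). Then $\operatorname{PolarFactor}(G.S) = U.V^\top$.

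The key computation is then a short chain of identities. Substituting $G^\top = S^{-1}.V.\Sigma.U^\top$ into $\langle G, U.V^\top\rangle_F = \operatorname{tr}(G^\top U V^\top)$ and using $U^\top U = I_r$ gives
\begin{equation*}
\langle G, \operatorname{PolarFactor}(G.S)\rangle_F \;=\; \operatorname{tr}\bigl(S^{-1} V \Sigma V^\top\bigr) \;=\; \operatorname{tr}\bigl(\Sigma \cdot V^\top S^{-1} V\bigr),
\end{equation*}
by the cyclic property of the trace. Since $\Sigma$ is diagonal with entries $\sigma_i > 0$, this equals $\sum_{i=1}^{r} \sigma_i \bigl[V^\top S^{-1} V\bigr]_{ii}$, and the claim reduces to showing that every diagonal entry of $V^\top S^{-1} V$ is strictly positive.

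For this final step, observe that $V^\top S^{-1} V$ is the Gram matrix of the columns of $S^{-1/2} V$ (well-defined because $S$ has strictly positive diagonal, so $S^{-1/2}$ exists). Its $i$-th diagonal entry is $\sum_{j} s_j^{-1} V_{ji}^2$, which is strictly positive because $V$ has orthonormal columns (so no column of $V$ is zero) and $s_j^{-1} > 0$ for all $j$. Combining with $\sigma_i > 0$ yields $\langle G, \operatorname{PolarFactor}(G.S)\rangle_F > 0$.

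The main obstacle I anticipate is purely bookkeeping rather than conceptual: one must be careful with the non-square / rank-deficient case, which is why the compact SVD is preferable to the full SVD (it avoids having to argue away zero singular values). The hypotheses on $S$ (strictly positive, bounded diagonal) are used in exactly two places: invertibility of $S$ so that $G = M.S^{-1}$ and the compact SVD of $M$ capture all of $G$'s information, and strict positivity of $s_j^{-1}$ to ensure the Gram-matrix diagonal entries are positive. The boundedness assumption, while not strictly needed for this lemma, is what justifies calling the resulting quantity a genuine (non-degenerate) descent direction in the broader discussion.
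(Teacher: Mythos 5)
Your proof is correct and follows essentially the same route as the paper's: rewrite the inner product as a trace, use cyclicity to isolate $S^{-1}$, and conclude by positivity. The only cosmetic difference is the representation used for the polar factor—you work with the compact SVD $M = U\Sigma V^\top$ and show $[V^\top S^{-1}V]_{ii} > 0$ directly, whereas the paper uses $\operatorname{PolarFactor}(X) = X(X^\top X)^{-1/2}$, the identity $X^\top\operatorname{PolarFactor}(X) = (X^\top X)^{1/2}$, and then the general fact that $\operatorname{tr}(T^{1/2}AT^{1/2}) > 0$ for positive definite $T$ and nonzero positive semidefinite $A$; both formulations handle rank deficiency cleanly and reduce to the same trace positivity.
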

        
        

        The proof is detailed in \cref{ap:proof_convergence}.

\paragraph{Geometric properties of the \turbomuon rescaling:}
We analyze the effect of the rescaling matrix $S$ used in \turbomuon pre-conditioning. This rescaling can be interpreted as inducing a norm that adapts to the structural coherence of the gradients, thereby normalizing the worst-case spectral contribution of feature clusters.

        The rescaling matrix $S$ is derived from the Gram matrix of the gradient, $H = G^\top G$. Geometrically, the entry $H_{jk} = \langle g_j, g_k \rangle$ quantifies the correlation between the gradient updates of feature column $j$ and feature column $k$. High values indicate that these features are being pushed in similar directions by the loss function.
        To extract a scalar measure of feature stability from this matrix without expensive eigendecompositions, we use the \textit{Gershgorin Circle Theorem} \citep{gershgorin1931uber}. The theorem states that the eigenvalues of $H$ are strictly bounded by its row sums. We define the Gershgorin coherence bound $\Lambda_j$ for the $j$-th feature as:
        
        \begin{equation}
            \Lambda_j \triangleq \sum_{k=1}^n |(G^\top G)_{jk}| = \underbrace{|(G^\top G)_{jj}|}_{\text{Feature Energy}} + \underbrace{\sum_{k \ne j} |(G^\top G)_{jk}|}_{\text{Structural Redundancy}}
        \end{equation}
        
        This quantity $\Lambda_j$ serves as an upper bound on the strength and redundancy of feature $j$ as it captures the magnitude of the feature's own gradient, and also its constructive interference with all other correlated features in the layer.
        By defining the scaling as $S = \operatorname{diag}(\Lambda)^{-1/2}$, the update of \turbomuon{}
        corresponds to steepest descent in an induced operator norm $\| \cdot \|_{\ell_2 \rightarrow S}$ (\cref{ap:proof_convergence}), that is explicitly shaped by these coherence bounds. We distinguish the following cases during optimization:
        
        \begin{itemize}
            \item \textbf{Penalization of Coherence:} When a group of features collapses (i.e., learns highly correlated representations), the off-diagonal terms in $G^\top G$ grow large. Consequently, the Gershgorin bound $\Lambda_j$ increases, and the effective step size $s_j$ for those features decreases. 
            \item \textbf{Amplification of Orthogonality:} Conversely, for features that are orthogonal to others, the redundancy term vanishes and the scaling is determined only by the feature's own energy, allowing for relatively larger updates.
        \end{itemize}
        
        Therefore, the steepest descent of \turbomuon ensures that the worst-case curvature estimate of every feature channel—bounded by the Gershgorin disk—is at most 1, resulting in a stabilized spectral update that 
        reduces sensitivity
        to feature collapse.

\paragraph{Limitations:} The previous analysis, as in~\cite{amsel2025polar,bernstein2024oldoptimizernewnorm}, relies on the assumption that the update direction exactly matches the $\operatorname{PolarFactor}$. In practice, however, the Newton-Schulz procedure only provides an approximation of this operator. Although we have shown that our method yields a tighter approximation, a gap remains between the theoretical analysis and the actual update used in practice~\cite{gonon2026insights}. 
Additionally, as in Muon, the effective step size $\eta$ depends on the gradient matrix $G$ and is implicitly absorbed into the optimizer’s learning rate. A dedicated analysis of how AOL pre-conditioning influences $\eta$ could provide useful insights for principled learning rate selection.

It would therefore be valuable to extend the analysis to explicitly account for approximate updates, for instance along the lines of approaches based on Linear Minimization Oracle (LMO)~\cite{pethick2025training}. We leave such an investigation to future work.
\section{Experiments: End-to-end training with \turbomuon}

While orthogonalization lies at the core of optimizers such as Muon, it is not the sole factor determining end-to-end performance. In this section, we first demonstrate that the proposed method can be used as a drop-in replacement of Muon, achieving equivalent performance while reducing training time across several benchmarks, including challenging and competitive speed-run settings. 
\begin{table}[t]
\centering
\begin{subtable}{0.47\linewidth}
    \centering
        \begin{tabular}{p{1mm} p{53pt}p{40pt}p{40pt}}
        \toprule
        \textbf{It} & \textcolor{turbomuoncolor}{\turbomuon} & \textcolor{dioncolor}{Muon+} & \textcolor{muoncolor}{Muon} \\
        \midrule
        
        
        
        

        2 
        & \cellcolor{red!15!white} 3.32$\pm 1e^{-3}$ 
        & \cellcolor{red!25!white} 3.35$\pm 3e^{-3}$ 
        & \cellcolor{red!20!white} 3.34$\pm 2e^{-3}$ \\
        
        3 
        & \cellcolor{red!10!blue!5} 3.29$\pm 1e^{-3}$ 
        & \cellcolor{red!12!white} 3.31$\pm 2e^{-3}$ 
        & \cellcolor{red!10!white} 3.30$\pm 2e^{-3}$ \\
        
        4 
        & \cellcolor{blue!12!white} 3.28$\pm 1e^{-3}$ 
        & \cellcolor{red!8!blue!8} 3.29$\pm 3e^{-3}$ 
        & \cellcolor{red!8!blue!8} 3.29$\pm 5e^{-3}$ \\
        
        5 
        & \cellcolor{blue!15!white} 3.28$\pm 1e^{-3}$ 
        & \cellcolor{blue!15!white} 3.28$\pm 3e^{-3}$ 
        & \cellcolor{blue!15!white} 3.28$\pm 1e^{-3}$ \\
        
        \bottomrule
        \end{tabular}
        \caption{\textbf{Our approach allows the removal of one Newton-Schulz iteration:} We trained a 144M GPT model on the FineWeb dataset~\cite{penedo2024fineweb} up to the performance of GPT-2 and reported validation loss. We reused the fastest existing training script and replaced the Newton-Schulz implementation without changing any other parameters. 
        }
        \label{tab:nanoGPT_accuracy}
\end{subtable}
\hfill
\begin{subtable}{0.51\linewidth}
    \centering
        \begin{tabular}{p{40pt}p{34pt}p{22pt}p{60pt}}
        \toprule
        \textbf{Dataset} & \textbf{Variant} & \textbf{Perf} & \textbf{Time}   \\
        \hline
        
        \multirow{2}{*}{\makecell[l]{speed-run\\Cifar10}} & \textcolor{dioncolor}{Muon+}   & \textbf{94.04}\%  & 1.359s \\
        & \textcolor{turbomuoncolor}{T-Muon}   & 94.03\%  & \textbf{1.348}s  (-0.8 \%) 
         \\
        \hline
        \multirow{2}{*}{\makecell[l]{speed-run\\nanoGPT}}& \textcolor{dioncolor}{Muon+}   & \textbf{3.28}  & 273.8s \\
         & \textcolor{turbomuoncolor}{T-Muon}   & \textbf{3.28}  & \textbf{266.0}s (-2.8\%) 
         \\
        \hline
        \multirow{2}{*}{\makecell[l]{ImageNet\\(1K)}} & \textcolor{dioncolor}{Muon+}    & \textbf{69.56}\%  & 422 m \\
         & \textcolor{turbomuoncolor}{T-Muon}   & 69.36\%  & \textbf{407} m (-3.6\%) 
         \\
          
        \bottomrule
        \end{tabular}
        \vspace{5mm}
    \caption{\textbf{End-to-end performance and runtime} for several benchmarks using \turbomuon with in-place replacement. \turbomuon (T-Muon) consistently achieves equivalent performance in less time (s: seconds, m: minutes).}
    \label{tab:end2endperf}
    \end{subtable}
\caption{\textbf{Improving Newton-Schulz convergence has an effective impact on end-to-end training.} With all other parameters fixed, the improved orthogonality translates to a lower final loss (\cref{tab:nanoGPT_accuracy}). Beyond 4 to 5 iterations, the loss marginally improves; hence, it is better to remove one iteration to reduce training time while maintaining results.}
\end{table}

We have seen in the previous section that  \turbomuon provides both a better approximation of the polar factor (\cref{fig:polar_error_NS})  and a large speedup (\cref{fig:runtime_NS}) for the core routine of Muon (i.e., the Newton-Schulz algorithm). However, this projection is only a part of the full training step, which also includes forward and backward propagation. In this section, we show that an in-place replacement of Muon by \turbomuon achieves the same performance in less training time on several benchmarks. We choose two "speed-run" settings in NLP (Modded-NanoGPT) and vision (CIFAR-10), and a standard ImageNet-1K setting.

We use the term “speed-run” to refer to highly optimized end-to-end training benchmarks whose objective is to reach a predefined target performance in the minimum possible wall-clock time. Unlike standard evaluations, these benchmarks are explicitly engineered to eliminate avoidable overheads in the model, data pipeline, and training loop, leaving little room for further acceleration. Consequently, even modest reductions in total runtime are meaningful, as they represent measurable gains on top of already highly optimized training recipes in which the Muon optimizer step has already been minimized. In this context, speed-run benchmarks provide a conservative test of whether improvements to Muon’s orthogonalization routine translate into practical end-to-end acceleration.

\paragraph{NanoGPT "speed-run":}
We evaluate \turbomuon on the Modded-NanoGPT benchmark~\cite{modded_nanogpt_2024}, a highly optimized “speed-run” setup for training a 124M GPT model on FineWeb until it reaches a validation cross-entropy of 3.28, replicating the estimated performance of GPT-2~\cite{radford2019language}. This benchmark is particularly challenging for our method, as it is tuned to minimize the overhead of Muon while emphasizing reproducible runtime measurements.
We reuse the most recent training recipe and replace Muon with \turbomuon, with a configurable number of Newton-Schulz iterations. All other hyperparameters are kept unchanged to assess drop-in compatibility. Experiments are run on 4×H100, reporting average loss and runtime over 10 runs.
An important point is that \turbomuon achieves equivalent performance with one fewer iteration (\cref{tab:nanoGPT_accuracy}) and reduces total runtime from $273.75 \pm 0.14$ s for the fastest version \textcolor{dioncolor}{Muon+} to $266.00 \pm 0.10$ s for \textcolor{turbomuoncolor}{\turbomuon{}}(\cref{tab:end2endperf}). Additional details and experiments are provided in \cref{ap:nanogpt-WR,ap:trainingregimes}.

\paragraph{CIFAR-10 "speed-run":}

We further evaluate \turbomuon on the CIFAR-10 speed-run benchmark~\cite{Jordan2024cifar10airbench}, which targets 94\% validation accuracy in the minimum possible time on a single H100 GPU. This setting involves training a convolutional neural network (CNN), where Muon 
reshapes convolutional gradient updates into 2D matrices before applying iterative orthogonalization.
This provides a complementary regime to evaluate AOL preconditioning.

As shown in \cref{tab:end2endperf}, AOL pre-conditioning achieves the same performance while reducing runtime from $1.359s$ to $1.348s$ (0.8\%). Despite the highly optimized nature of this benchmark, this gain highlights the drop-in efficiency of \turbomuon and its applicability across architectures.

\paragraph{ImageNet-1K:} To evaluate the influence of \turbomuon on a larger dataset,
we also evaluate a drop-in replacement on an ImageNet-1K baseline using a ViT-Tiny/16 model from \texttt{timm}~\cite{rw2019timm}. 
\cref{tab:end2endperf} shows that a 3.6\% runtime improvement is achieved with a similar final performance. 

\section{Conclusion and Perspectives}

In this work, we propose AOL pre-conditioning 
to improve 
the efficiency of orthogonality-based optimizers such as Muon.
Under practical approximation budgets (i.e., $t \leq 5$), 
the proposed approach yields both faster and more accurate orthogonalization: Empirically, AOL significantly reduces the initial polar error compared to Frobenius normalization on large random matrices (\cref{fig:precond_bench}), and enables the removal of one Newton-Schulz iteration while still achieving equal or lower error than the five-step Muon and Muon+ baselines (\cref{fig:polar_error_NS}). 
These gains translate into consistent runtime improvements, with AOL pre-conditioning acting as a drop-in replacement that requires no hyperparameter tuning:  Across speed-run and standard benchmarks, \turbomuon preserves performance while reducing computation time. In particular, 
we achieve near-identical performance on NanoGPT and CIFAR-10 speed-runs, 
with up to 3\% speedup,
and similar trends on larger-scale settings such as ImageNet-1K.
At medium scale (\cref{tab:1b_runtime}), Turbo-Muon can even provide step-time speedups of 8–10\%.

Beyond empirical performance, we provide a geometric interpretation of AOL pre-conditioning through a steepest descent characterization. We show that \turbomuon updates are close to the solution of a steepest descent problem under a structured operator norm. This perspective reveals that AOL pre-conditioning implicitly controls the spectral geometry of the update. As a consequence, \turbomuon reduces sensitivity to feature collapse and promotes more stable and balanced optimization dynamics.

\paragraph{Broader impact.}
By reducing the computational cost of orthogonality-constrained optimization without sacrificing performance or requiring additional tuning, AOL pre-conditioning contributes to more efficient training of large-scale models. Moreover, AOL pre-conditioning is orthogonal to many existing Muon improvements; in particular, we show in the appendix (\cref{ap:tridao}) that it can be naturally combined with the very recent Gram-based Newton-Schulz computation method~\cite{GramNewtonSchulz}, making it a complementary building block rather than a competing approach.


\ifarxiv

\section*{Acknowledgments}

This work was carried out within the DEEL project,\footnote{\url{https://www.deel.ai/}} which is part of IRT Saint Exupéry and the ANITI AI cluster. The authors acknowledge the financial support from DEEL's Industrial and Academic Members and the France 2030 program - Grant agreements n°ANR-10-AIRT-01 and n°ANR-23-IACL-0002.

\noindent This work was granted access to the HPC resources of IDRIS under the allocation 2025-AD011016381 made by GENCI. 
\fi

{\small
\bibliographystyle{ieeenat_fullname}
\bibliography{9_references}
}

\onecolumn
\appendix
\clearpage \appendix

\section{On the convergence of \turbomuon}
\subsection{Link with norm  constrained steepest descent}
\label{ap:proof_convergence}
    In this section, we build upon the work of \citet{bernstein2024oldoptimizernewnorm}, as they show that Muon's update (without accumulation) solves a steepest descent direction in a $\ell_2 \rightarrow \ell_2$ induced operator norm.
    As a starting point, we define how the norm induced by our preconditioning differs from the original one. We show in \cref{supp:descent_proof}
    that preconditioning does not bias the overall optimization process.
    
    \hfill
    
    Following \cite{bernstein2024oldoptimizernewnorm}, we formulate the steepest descent update using a penalized objective, which is equivalent to the constrained formulation for a specific step size $\eta$, i.e.:
    
    \begin{equation}
        \arg \min_{\Delta W \in \mathbb{R}^{m \times n}} \left[ \langle G, \Delta W \rangle + \frac{\lambda}{2} \| \Delta W\|_{\alpha \rightarrow \beta}^2 \right],
    \label{eq:steepest_descent}
    \end{equation}
    
    \noindent where $\langle \cdot , \cdot \rangle$ denotes the Frobenius inner product. For any matrix $M \in \mathbb{R}^{m \times n}$, and normed spaces $(\mathbb{R}^{d_\mathrm{in}}, \| \cdot \|_\alpha)$ and $(\mathbb{R}^{d_\mathrm{out}}, \| \cdot \|_\beta)$, the ``$\alpha$ to $\beta$'', we recall the induced operator norm as:
    
    \begin{equation}
        \| M \|_{\alpha \rightarrow \beta} = \max_{x \in \mathbb{R}^{d_\mathrm{in}}} \frac{\| M x \|_\beta}{\| x \|_\alpha}.
    \end{equation}
    
    \noindent Importantly, as explicited in Proposition 5 of \cite{bernstein2024oldoptimizernewnorm}. When $\alpha = 2$ and $\beta = 2$, we have that the solution to Equation~\ref{eq:steepest_descent} is:

    \begin{equation*}
        \Delta W_{\| \cdot \|_{\ell_2 \rightarrow \ell_2}} = -\eta \operatorname{PolarFactor}(G),
    \end{equation*}
    
    \noindent with $\eta$ the sum of the singular values of $G$ divided by $\lambda$. In this paper, we introduce the \turbomuon update as $\operatorname{PolarFactor}(G S)$. With $S$ the diagonal matrix composed of AOL preconditioning coefficients (previously denoted as $s$), which are strictly positive and bounded in practice. Here we show that this update is the solution of a slightly different steepest descent direction problem formulated using a rescaled norm.
    
        
    
    

    
    \begin{proposition}[\turbomuon in the Steepest Descent framework]
    For any gradient matrix $G \in \mathbb{R}^{m \times n}_{\setminus 0}$, and any sharpness $\lambda > 0$, consider the problem:
        
        \begin{equation}
        \begin{aligned}
            & \arg \min_{\Delta W \in \mathbb{R}^{m \times n}} \left[  \langle G, \Delta W \rangle_F + \frac{\lambda}{2} \| \Delta W \|_{\ell_2 \rightarrow S}^2 \right] \\
             \text{with} & \ \| M \|_{\ell_2 \rightarrow S} := \sup_{x \in \mathbb{R}^{n}_{\setminus 0}} \frac{\| M S^{-1} x \|_2}{\| x\|_2}.
        \end{aligned}
        \end{equation}
        \noindent This problem is solved with a step size $\eta = \frac{1}{\lambda} \operatorname{tr}(\Sigma_{GS})$, with $GS=U \Sigma_{GS} V^\top$ the SVD decomposition of $GS$, with $U$ and $V$ unitary,  and an update:
    
        \begin{equation}
            \Delta W_{\| \cdot \|_{\ell_2 \rightarrow S}} = -\eta \operatorname{PolarFactor}(G S) S,
        \end{equation}
    
        \noindent This solution is unique if and only if $G$ is of full rank. 

    \end{proposition}
    
    This proposition with $S$, the diagonal matrix filled with rescaling values obtained from \cref{eq:aol} (strictly positive),  generalizes a scaled variant of \turbomuon as an optimizer that yields steepest descent directions in an induced norm that is dependent on AOL rescalings, as per the framework of \cite{bernstein2024oldoptimizernewnorm}.
    
    \begin{proof}
    Given the problem 
    
        \begin{equation}
        \begin{aligned}
            & \arg \min_{\Delta W \in \mathbb{R}^{m \times n}} \left[  \langle G, \Delta W \rangle_F + \frac{\lambda}{2} \| \Delta W \|_{\ell_2 \rightarrow S}^2 \right] \\
             \text{with} & \ \| M \|_{\ell_2 \rightarrow S} := \sup_{x \in \mathbb{R}^{n}_{\setminus 0}} \frac{\| M S^{-1} x \|_2}{\| x\|_2}.
        \end{aligned}
        \end{equation}

    By setting $\Delta W' = \Delta W S^{-1}$, we have 
$\| \Delta W \|_{\ell_2 \rightarrow S}^2 = \| \Delta W' \|_{\ell_2 \rightarrow \ell_2}^2$.

Besides using the cyclic property of the trace along with the symmetry of $S$, the inner Frobenius product on the left-hand side of this expression can be modified by:
    \begin{equation}
    \label{eq:frobprod}
    \langle G, \Delta W  \rangle_F = \langle G, \Delta W' S  \rangle_F = \langle G S, \Delta W' \rangle_F
    \end{equation}

    The problem is thus equivalent to minimizing:
    
    \begin{equation}
        \arg \min_{\Delta W' \in \mathbb{R}^{m \times n}} \left[ \langle GS, \Delta W' \rangle + \frac{\lambda}{2} \| \Delta W'\|_{2 \rightarrow 2}^2 \right] 
    \end{equation}
    
    From Proposition 5 of \citet{bernstein2024oldoptimizernewnorm}, the solution of this problem is given by:

    \begin{equation}
       \Delta W' = - \eta \operatorname{PolarFactor}(GS)
    \end{equation}
    with a step size $\eta = \frac{1}{\lambda} \operatorname{tr}(\Sigma_{GS})$.
    \begin{equation}
       \Delta W = \Delta W' S= - \eta \operatorname{PolarFactor}(GS) S
    \end{equation}

    The rescaled \turbomuon update is thus close to the solution of a steepest descent framework under the  $\| . \|_{\ell_2 \rightarrow S}$ norm.
    
    \end{proof}

    \paragraph{Discussion:} One main difference between this proof and the update used in \turbomuon is the presence of an extra scaling factor $S$. Applying this scaling vector after the polar factorization would yield an optimizer with dynamics different from Muon's. The exploration of such a novel optimizer is left for future work. 
    
    \subsection{AOL preconditioning cannot lead to Muon divergence}
    \label{supp:descent_proof}
    
        We start by recalling \cref{lem:alignment}:

        \begin{lemma*}[Lemma 1]
        \label{lem:alignment2}
            Let $S$ be a diagonal matrix of strictly positive and bounded entries, we have that $\forall \ G \in \mathbb{R}^{m \times n}_{\setminus 0}$:
        
            \begin{equation}
                \langle G, \operatorname{PolarFactor}(GS) \rangle > 0.
            \end{equation}
        
            \noindent Meaning that the projected update recovered by \turbomuon is always ``aligned'' with the raw gradient $G$. Thus, it yields a strict descent direction.
        \end{lemma*}
        
        \begin{proof}
        Write $X:=G S$. By definition of the (left) polar factor,
        \[
            \operatorname{PolarFactor}(X)=X\,(X^\top X)^{-1/2},
        \]
        where $(\cdot)^{-1/2}$ denotes the unique positive–semidefinite inverse square root on the support of $X^\top X$.
        Hence
        \begin{equation*}
        \begin{aligned}
            \langle G,\operatorname{PolarFactor}(G S)\rangle
            &= \operatorname{tr}\!\big(G^\top\,\operatorname{PolarFactor}(X)\big) \\
            &= \operatorname{tr}\!\big(S^{-1}\,X^\top \operatorname{PolarFactor}(X)\big),
        \end{aligned}
        \end{equation*}
        since $G^\top = S^{-1}X^\top$ and $S^{-1}\succ0$ (as per \cref{eq:aol}). Using $X^\top \operatorname{PolarFactor}(X)=X^\top X\,(X^\top X)^{-1/2}=(X^\top X)^{1/2}$, we get
        \[
            \langle G,\operatorname{PolarFactor}(GS)\rangle
            = \operatorname{tr}\!\big(S^{-1}\,(X^\top X)^{1/2}\big).
        \]
        
        The matrix $(X^\top X)^{1/2}\succeq0$ and is nonzero because $G\neq0$ and $S$ is invertible, while $S^{-1}\succ0$. For positive semidefinite $S\neq0$ and positive definite $T$, $\operatorname{tr}(TS)=\operatorname{tr}(T^{1/2}ST^{1/2})>0$. Therefore $\operatorname{tr}\!\big(S^{-1}(X^\top X)^{1/2}\big)>0$ and the claim follows.
        The argument does not require $G$ to be square or full rank. If $X$ is rank deficient, interpret $(X^\top X)^{-1/2}$ as the Moore–Penrose inverse square root, so the identity $X^\top \operatorname{PolarFactor}(X)=(X^\top X)^{1/2}$ holds on the support of $X^\top X$, and the same positivity conclusion applies.
        \end{proof}

        \begin{figure*}
            \centering
            \begin{subfigure}{0.32\textwidth}
                \includegraphics[width=\textwidth]{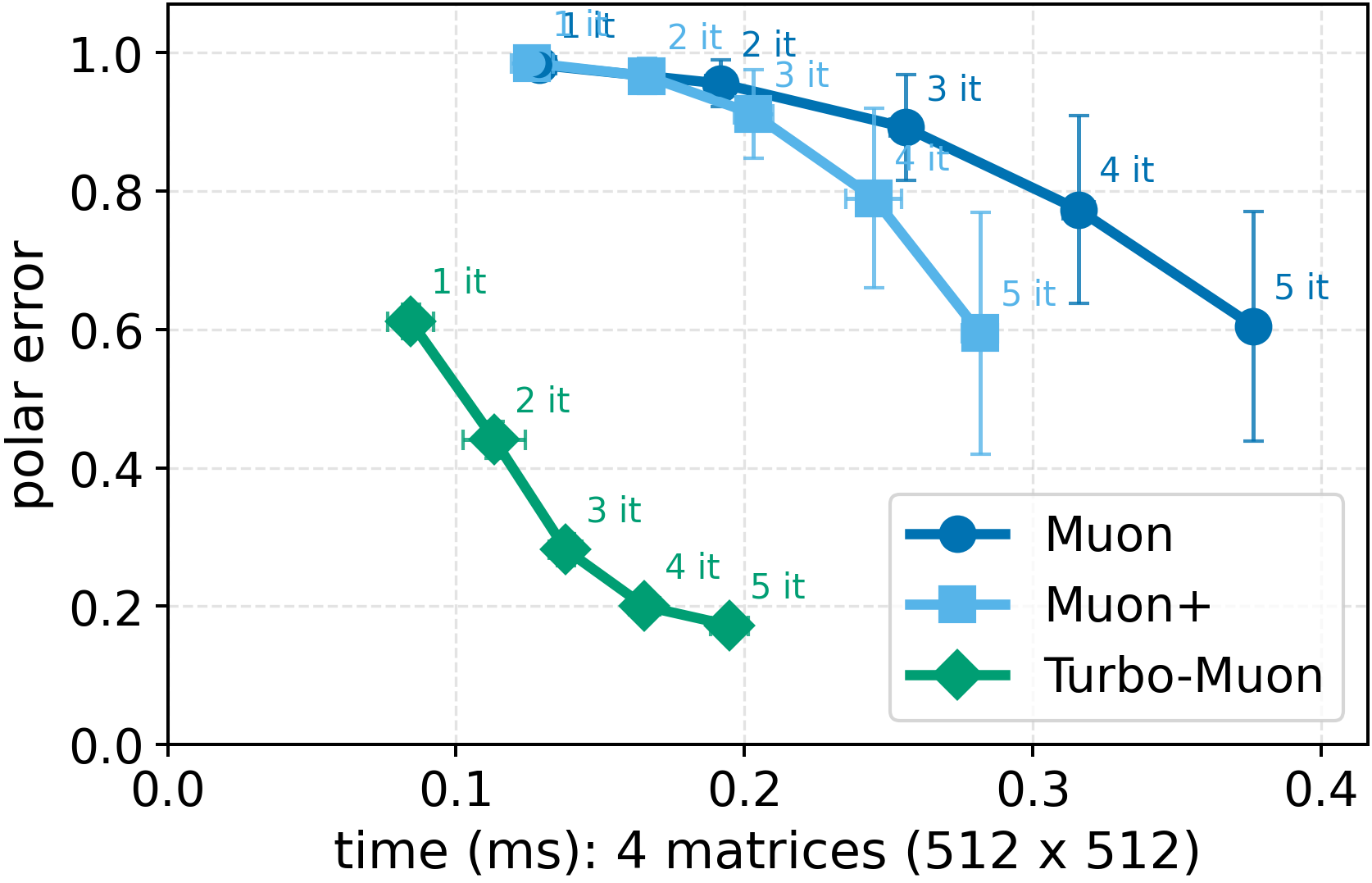}
                \caption{Levy distribution: $\alpha = 1.0$, $\beta = 0$}
            \end{subfigure}
            \hfill
            \begin{subfigure}{0.32\textwidth}
                \includegraphics[width=\textwidth]{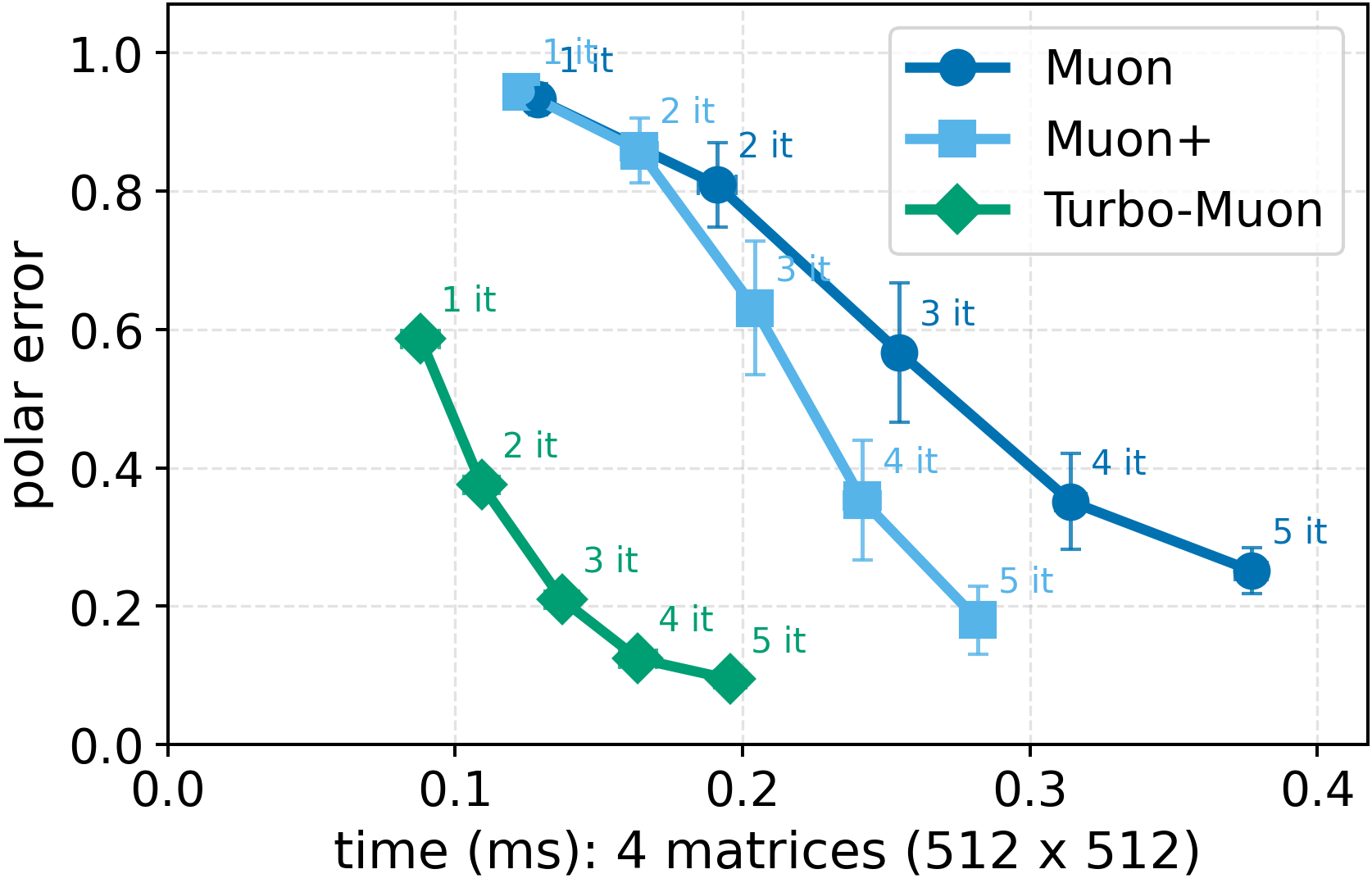}
                \caption{Levy distribution: $\alpha = 1.5$, $\beta = 0$}
            \end{subfigure}
            \hfill
            \begin{subfigure}{0.32\textwidth}
                \includegraphics[width=\textwidth]{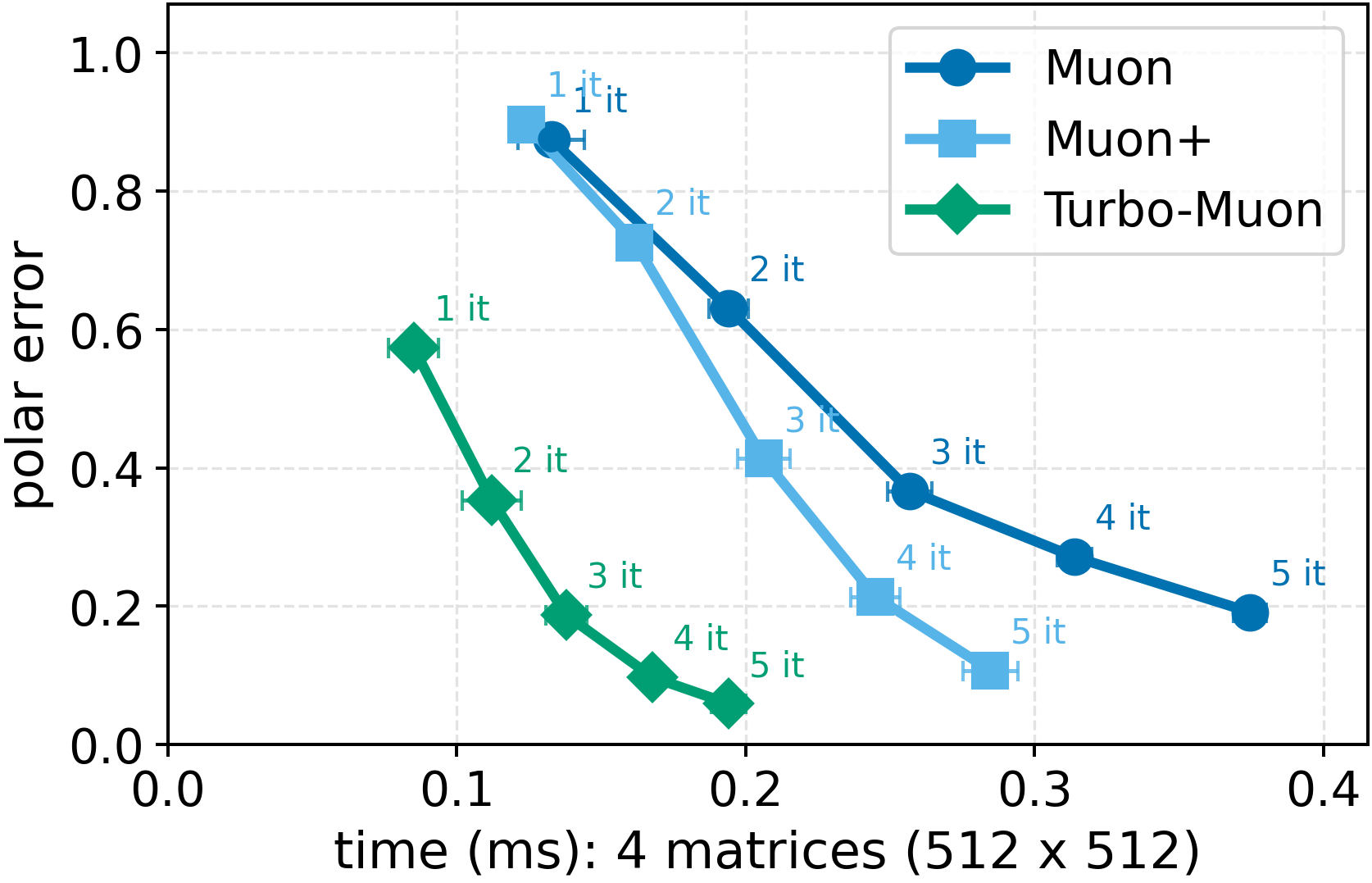}
                \caption{Levy distribution: $\alpha = 2.0$, $\beta = 0$}
            \end{subfigure}
            \caption{We reproduced the \cref{fig:pareto8192} using heavy-tailed distributions. Despite an overall degradation in the resulting polar error, \turbomuon outperforms existing approaches by a significant margin.}
            \label{fig:paretos_levy}
        \end{figure*}

\section{What About Gradient Distributions ?}
    \label{ap:real_gradients}
    
    \subsection{Heavy-tailed matrices}
    In this section, we run the complementary experiments from \cref{ssec:AOLxNS} and \cref{ap:bias_analysis} on matrices sampled from a Levy distribution of parameters $\alpha$ and $\beta$. This choice is motivated by observations made in \citet{Simsekli2019TailIndex}, which found that the tail index of gradients belongs to $\alpha \in [1.0, 1.8]$ in typical training of classification neural networks. Independently, \citet{kunstner2024heavy} have also shown similar observations on transformers trained on language modeling tasks.

    \hfill
      
    We reproduce the protocol of \cref{fig:pareto8192} from the main paper and report results using three different tail-indices $\{ 1.0, 1.5, 2.0 \}$ in \cref{fig:paretos_levy}, ensuring we cover realistic values. We set $\beta=0$ since, despite the fact that the gradient expectation over timesteps is non-null, we made the assumption that its expectation over neurons is null: high values for $\beta$ induce a rank collapse toward rank 0 (i.e., a constant), which is unrealistically pathological. Intuitively, when $\alpha=2.0$, the distribution is nearly Gaussian, and for $\alpha\leq 1.0$, estimation of the mean becomes an inconsistent estimator~\cite{wiki_Levy_distribution}. For all tested values of $\alpha$, \turbomuon outperforms Muon/Muon+ by a significant margin in realistic settings ($t \leq 5$).

    \begin{figure*}
        \centering
        \begin{subfigure}{0.32\textwidth}
            \includegraphics[width=\textwidth]{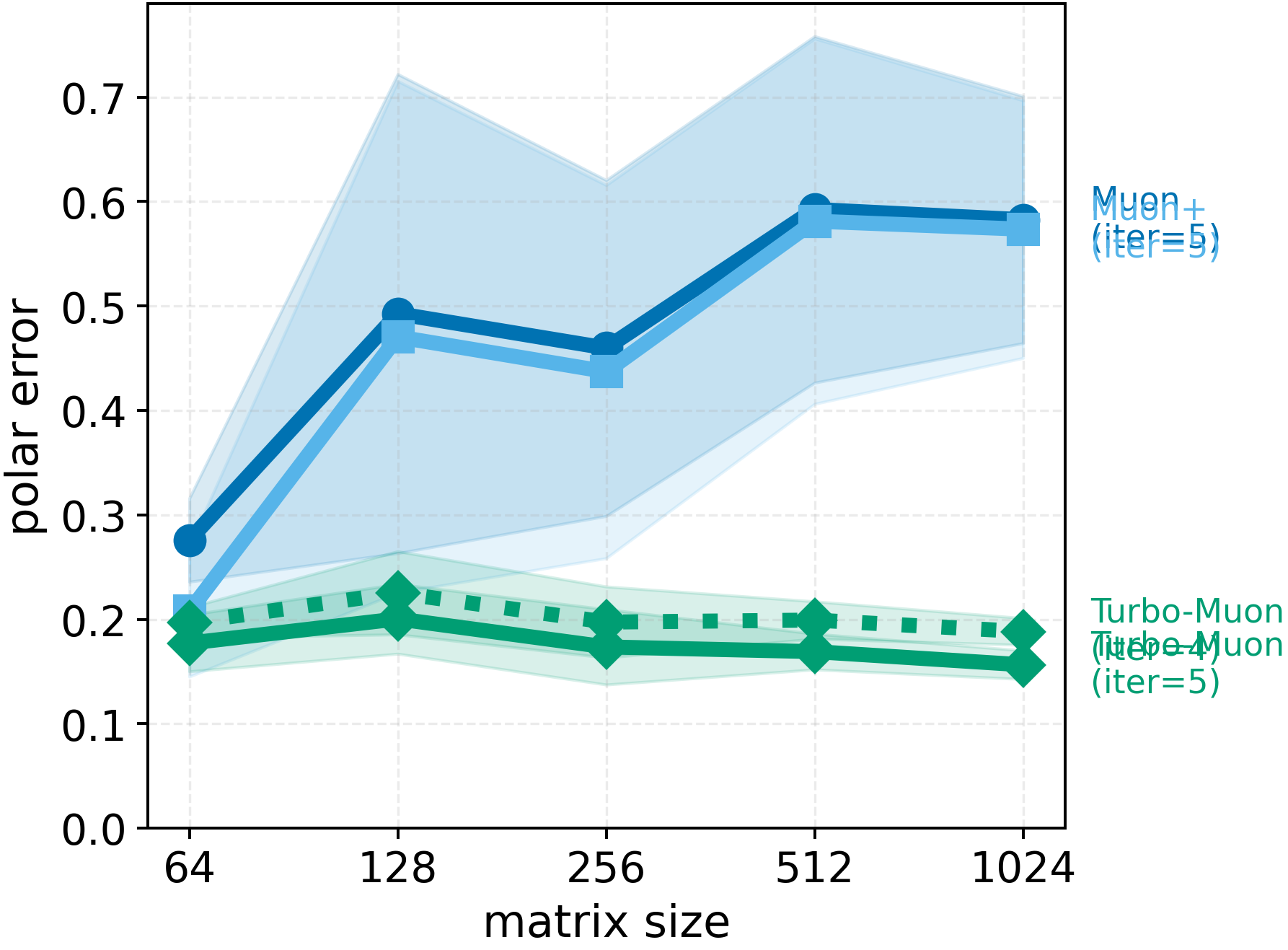}
            \caption{Levy distribution: $\alpha = 1.0$, $\beta = 0$}
        \end{subfigure}
        \hfill
        \begin{subfigure}{0.32\textwidth}
            \includegraphics[width=\textwidth]{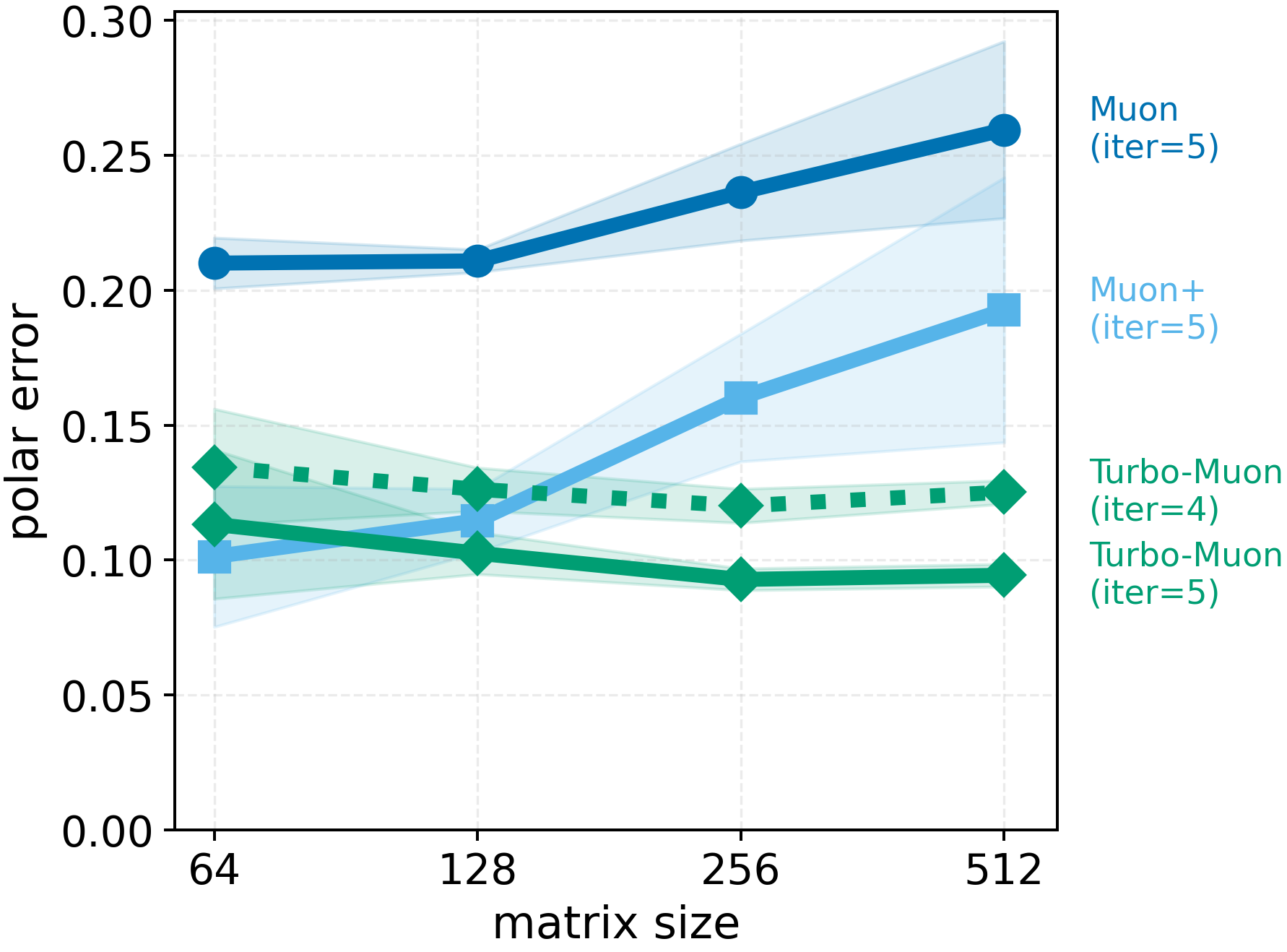}
            \caption{Levy distribution: $\alpha = 1.5$, $\beta = 0$}
        \end{subfigure}
        \hfill
        \begin{subfigure}{0.32\textwidth}
            \includegraphics[width=\textwidth]{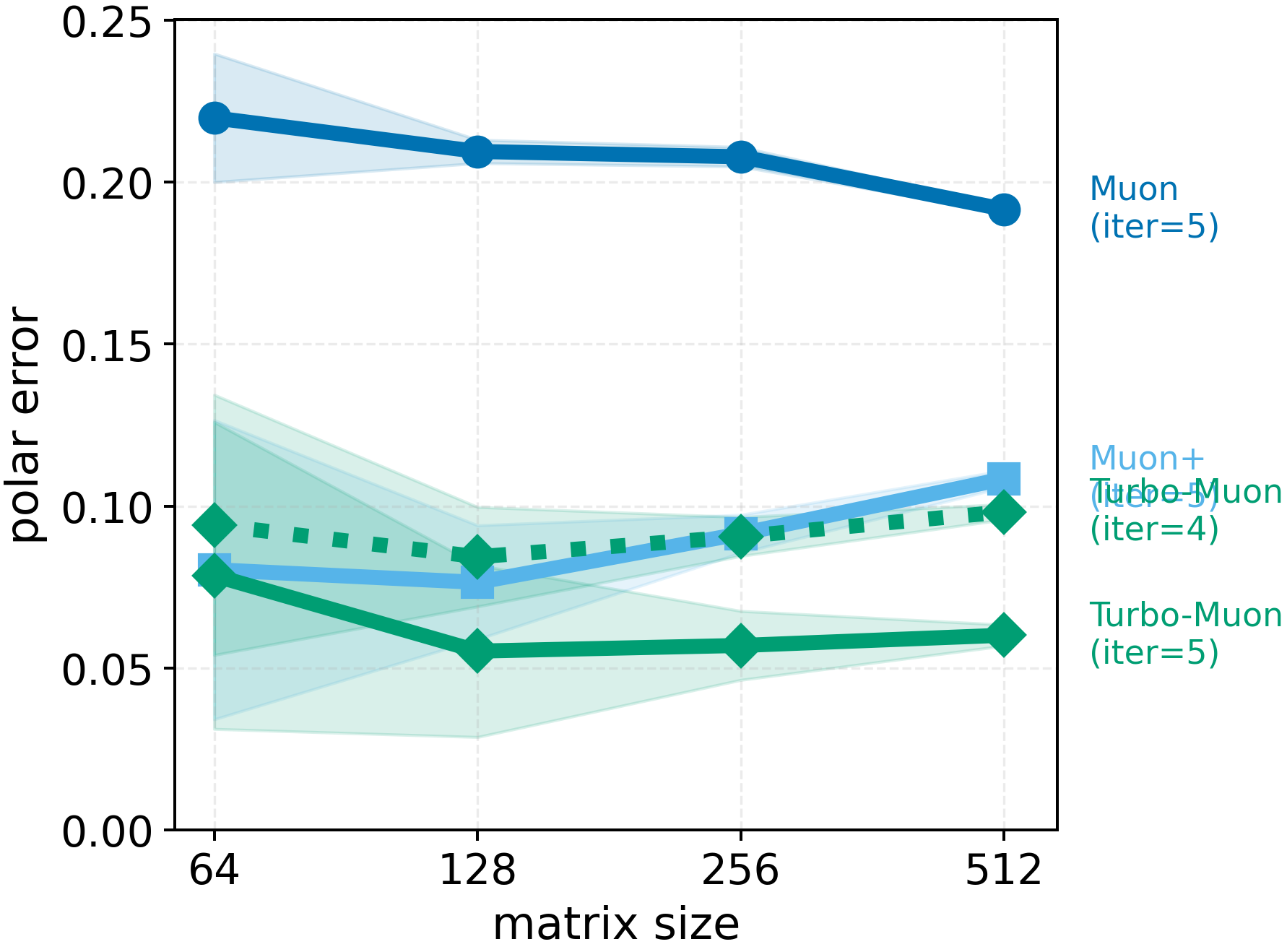}
            \caption{Levy distribution: $\alpha = 2.0$, $\beta = 0$}
        \end{subfigure}
        \caption{We reproduced the \cref{fig:polar_error_NS} using heavy-tailed distributions. Despite an overall degradation in the resulting polar error, \turbomuon outperforms existing approaches by a significant margin.}
        \label{fig:polar_bench_levy}
    \end{figure*}

    Similarly, we reproduced the result of \cref{fig:polar_error_NS} from the main paper for Levy distributions and reported results in \cref{fig:polar_bench_levy}. Again, we observe a global degradation of performances, with a lower degradation for \turbomuon. Notably, in these experiments, we observed that the PyTorch implementation of the singular value decomposition was unstable in these regimes, especially for large matrices. Thus limiting these experiments to matrices of size $512\times 512$.
    
    \begin{figure*}
        \centering
        \centering
        \begin{subfigure}{0.32\textwidth}
            \includegraphics[width=\textwidth]{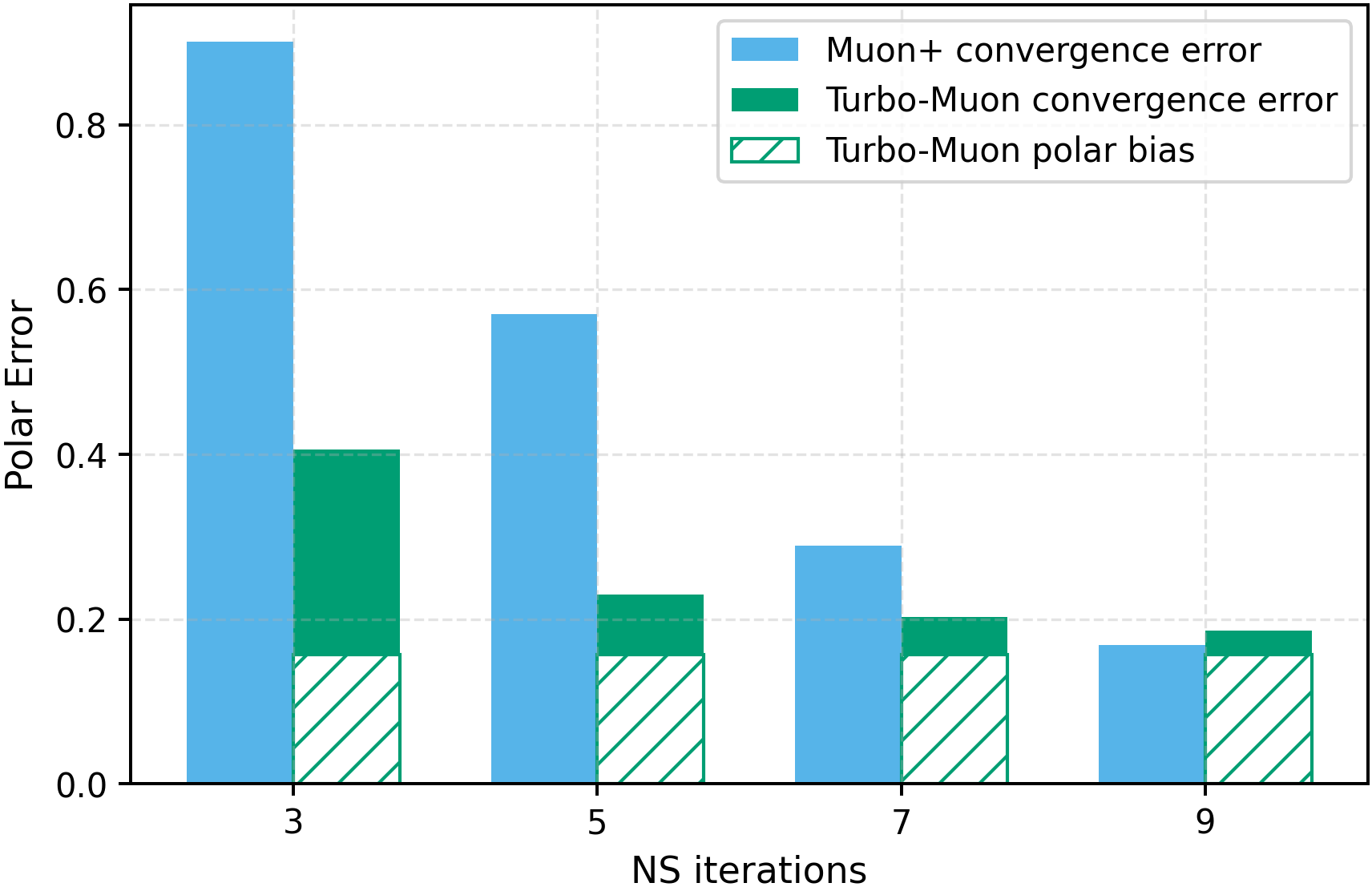}
            \caption{Levy distribution: $\alpha = 1.0$, $\beta = 0$}
        \end{subfigure}
        \hfill
        \begin{subfigure}{0.32\textwidth}
            \includegraphics[width=\textwidth]{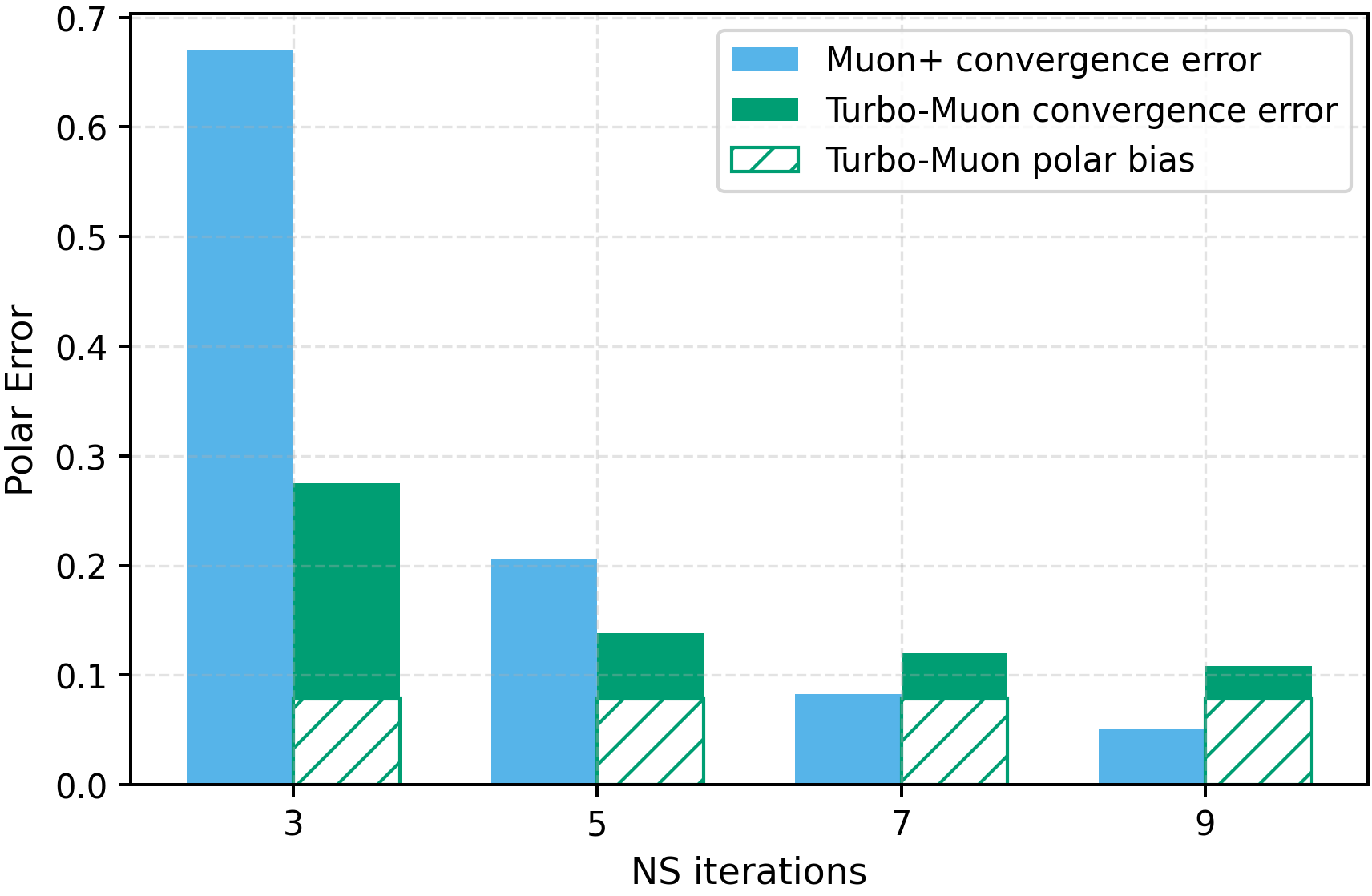}
            \caption{Levy distribution: $\alpha = 1.5$, $\beta = 0$}
        \end{subfigure}
        \hfill
        \begin{subfigure}{0.32\textwidth}
            \includegraphics[width=\textwidth]{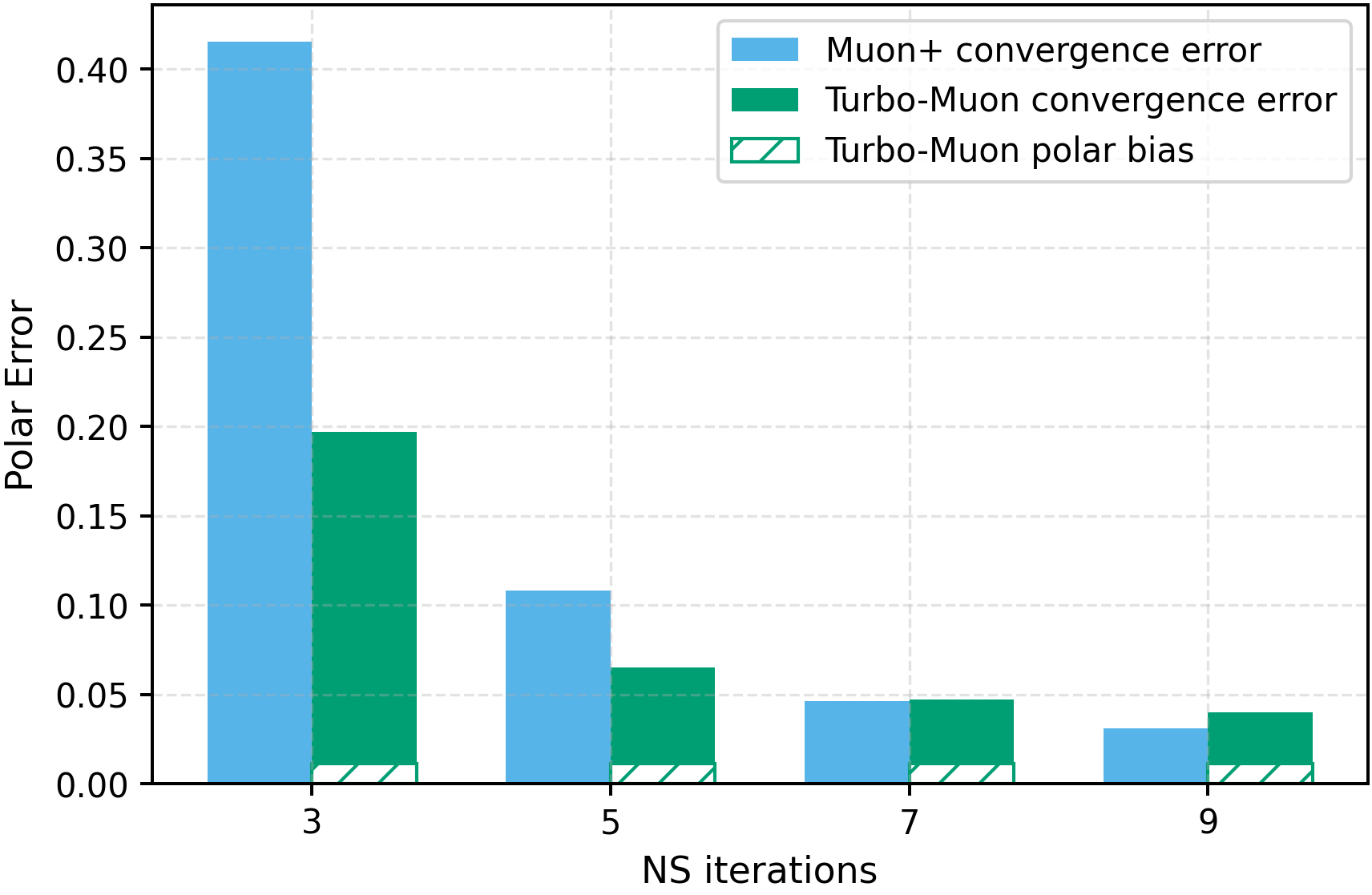}
            \caption{Levy distribution: $\alpha = 2.0$, $\beta = 0$}
        \end{subfigure}
        \caption{Reproduction of \cref{fig:bias_converge} with heavy tailed distributions. This extreme regime--where the singular value decomposition is at its numerical limit-- shows that the tail index correlates with the bias error induced by our preconditioning.}
        \label{fig:bias_converge_levy}
    \end{figure*}
    
    Interestingly, in these regimes, the orthogonality error of \turbomuon is very low, highlighting that the remaining error is not due to a lack of convergence but to an increased bias for poorly conditioned matrices. This can be observed by reproducing the \cref{fig:bias_converge_levy} experiment from the paper: for instance, when $\alpha=1$, the polar bias can reach up to 0.15, which is significantly higher than the bias found with matrices sampled from normal distributions. Yet the drastically improved convergence still leads to a lower polar error compared to Muon+, which requires nine iterations (27 matrix multiplications) to reach a lower polar error than \turbomuon. The polar bias observed in \cref{tab:asymptotic} suggests that empirical gradients have a tail index $1.0 \leq \alpha \leq 1.5$, which is coherent with the observations of \citet{Simsekli2019TailIndex}.

\section{Result for various matrix sizes}
    \label{ap:more_normal_results}
    
    In this section, we show that the results displayed in \cref{fig:pareto8192} of the main paper hold for various matrix sizes sampled from the Normal distribution. Interestingly, two effects are observed: on one side, our preconditioning yields improvements in the polar error (shifting \turbomuon on the vertical axis), especially for large matrices; however, this effect becomes smaller for small matrices. On the other side, the triton kernel of 
    \cref{alg:muon} line 5
    improves runtime (shifting \turbomuon on the horizontal axis) on small matrices, as those lead to communication-bound kernels. Similarly, this effect becomes smaller for large matrices, as those lead to compute-bound kernels. This combination ensures a dominance of \turbomuon in all configurations.
    
    \begin{figure*}
        \centering
        \includegraphics[width=0.32\textwidth]{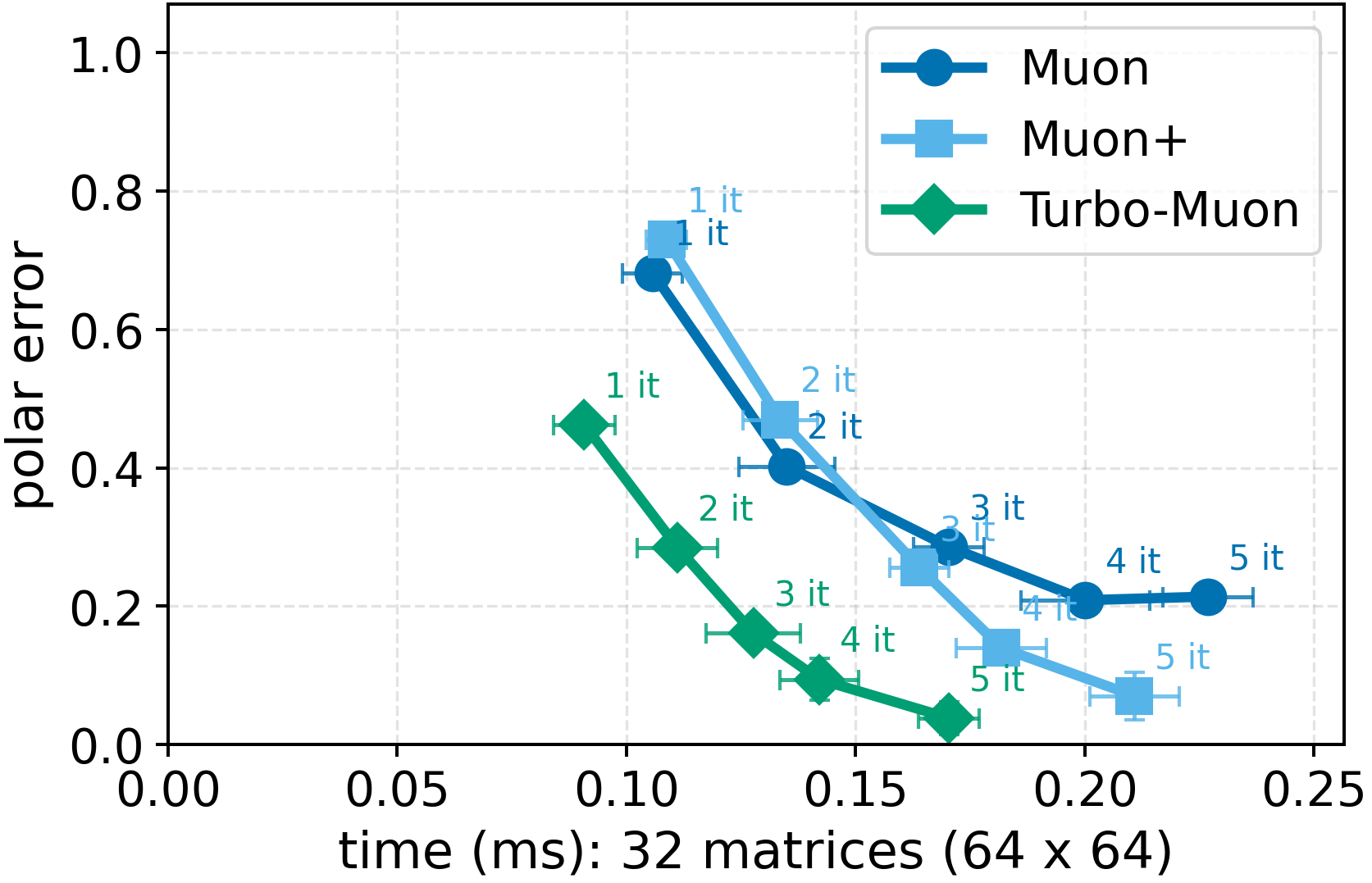}
        \hfill
        \includegraphics[width=0.32\textwidth]{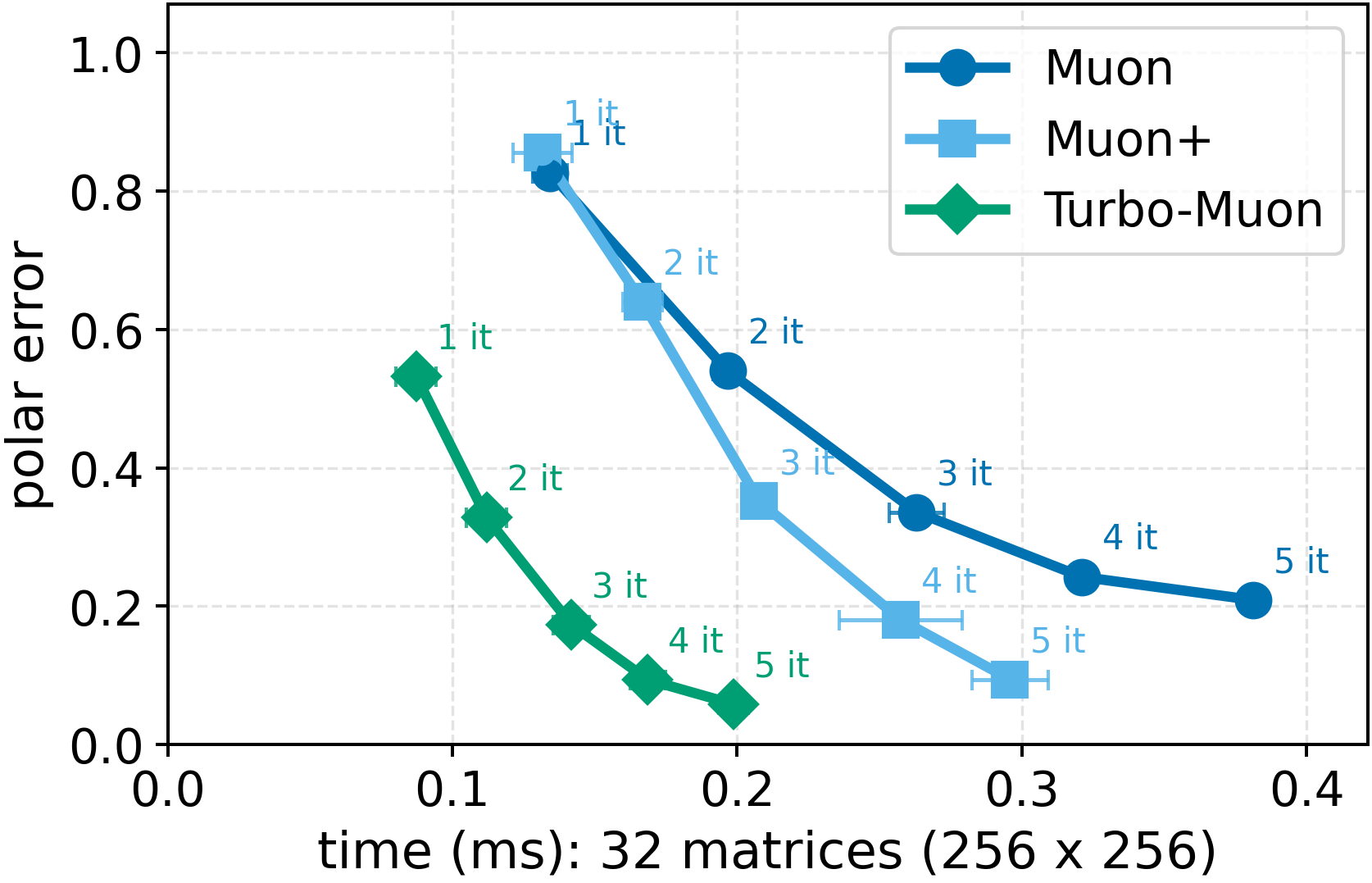}
        \hfill
        \includegraphics[width=0.32\textwidth]{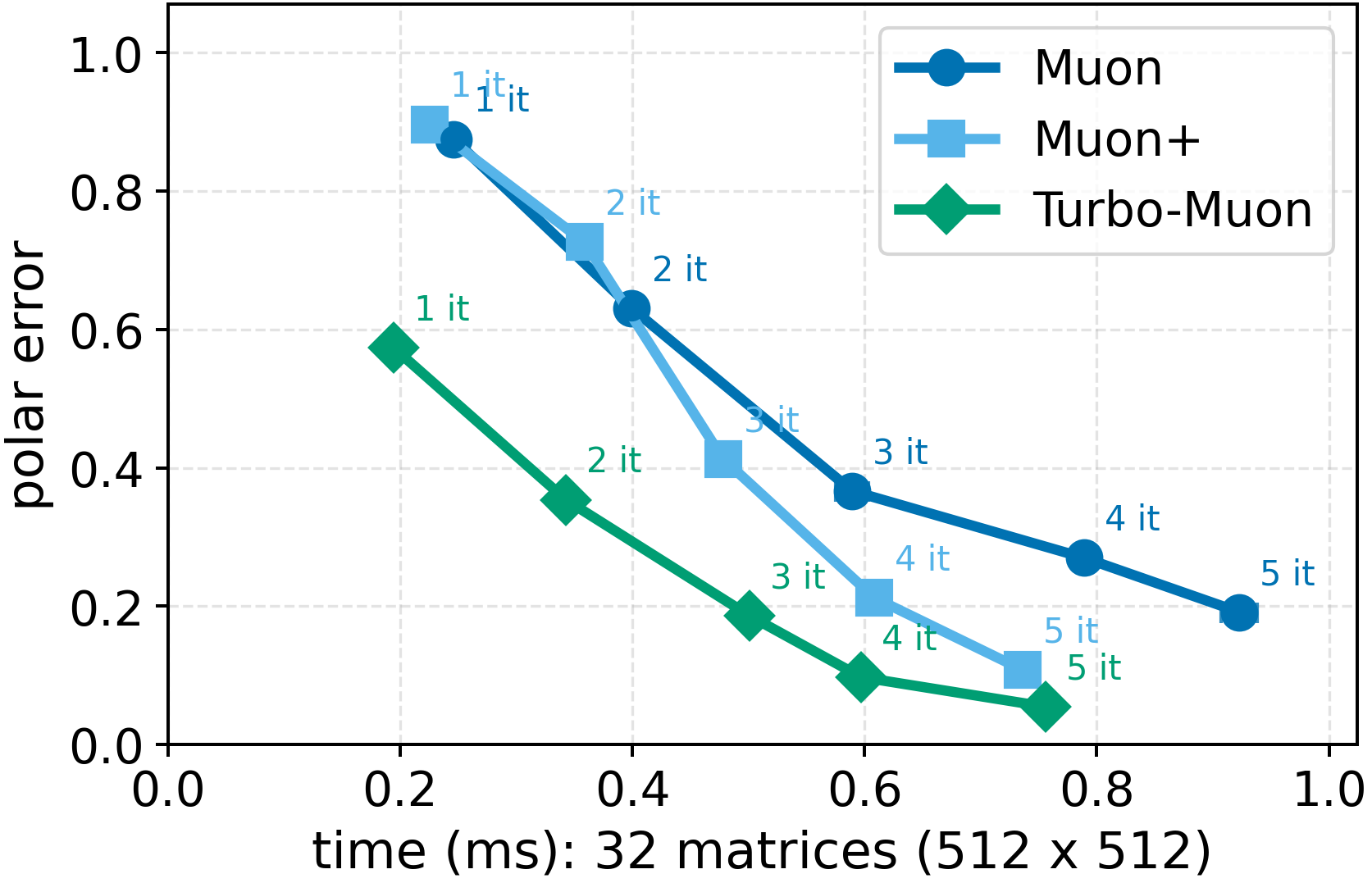}
        \hfill
        
        \includegraphics[width=0.32\textwidth]{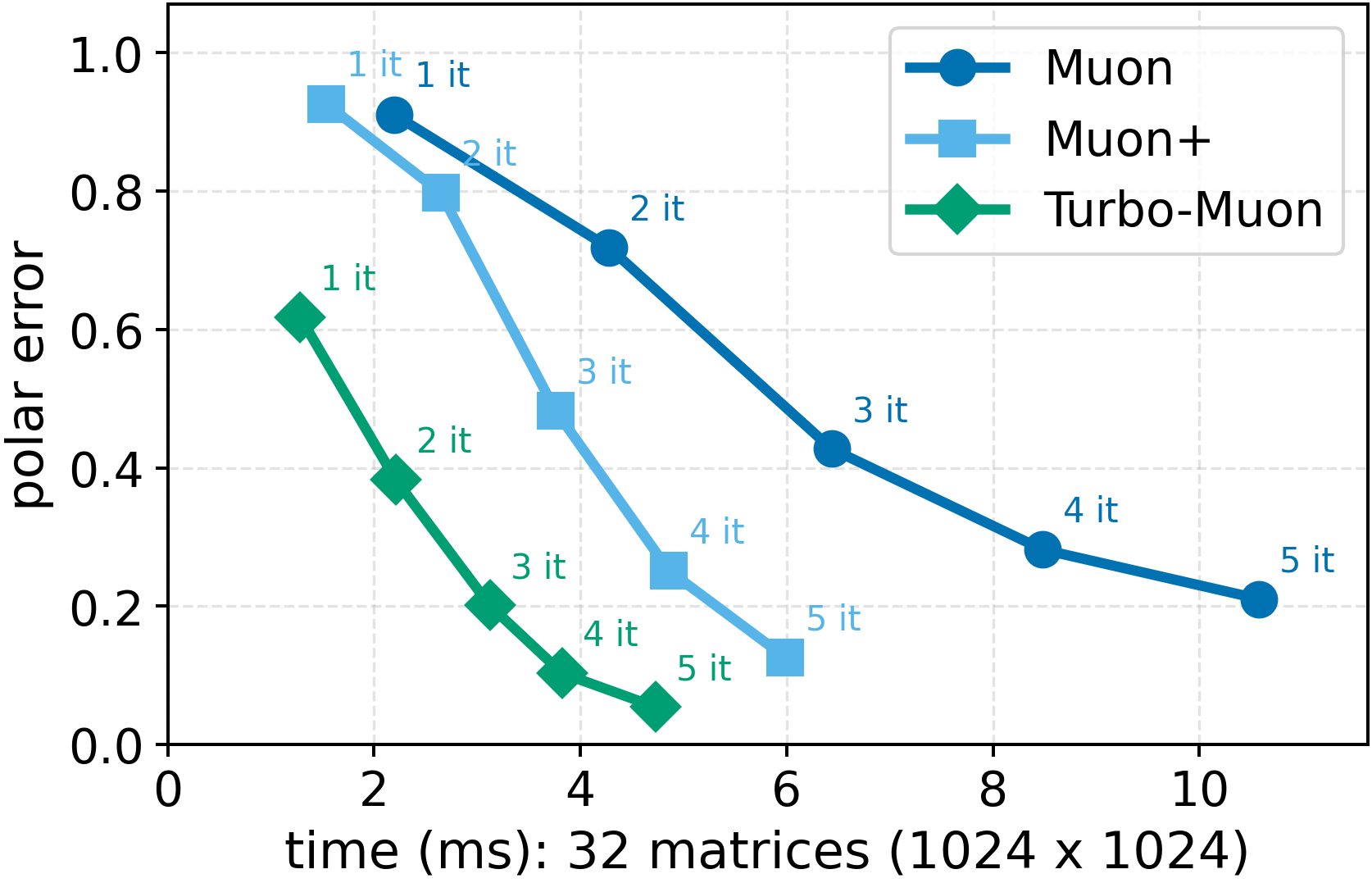}
        \hfill
        \includegraphics[width=0.32\textwidth]{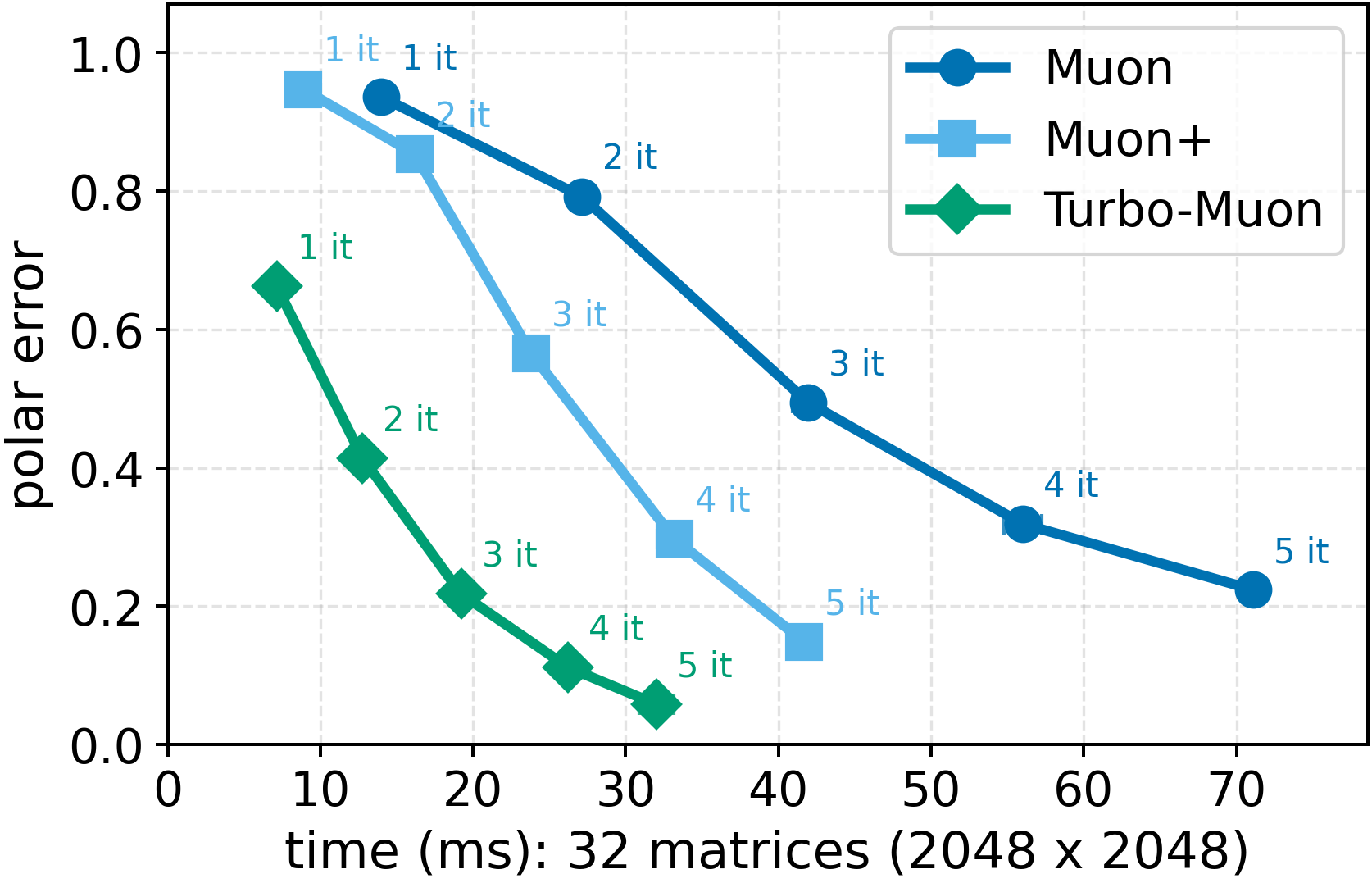}
        \hfill
        \includegraphics[width=0.32\textwidth]{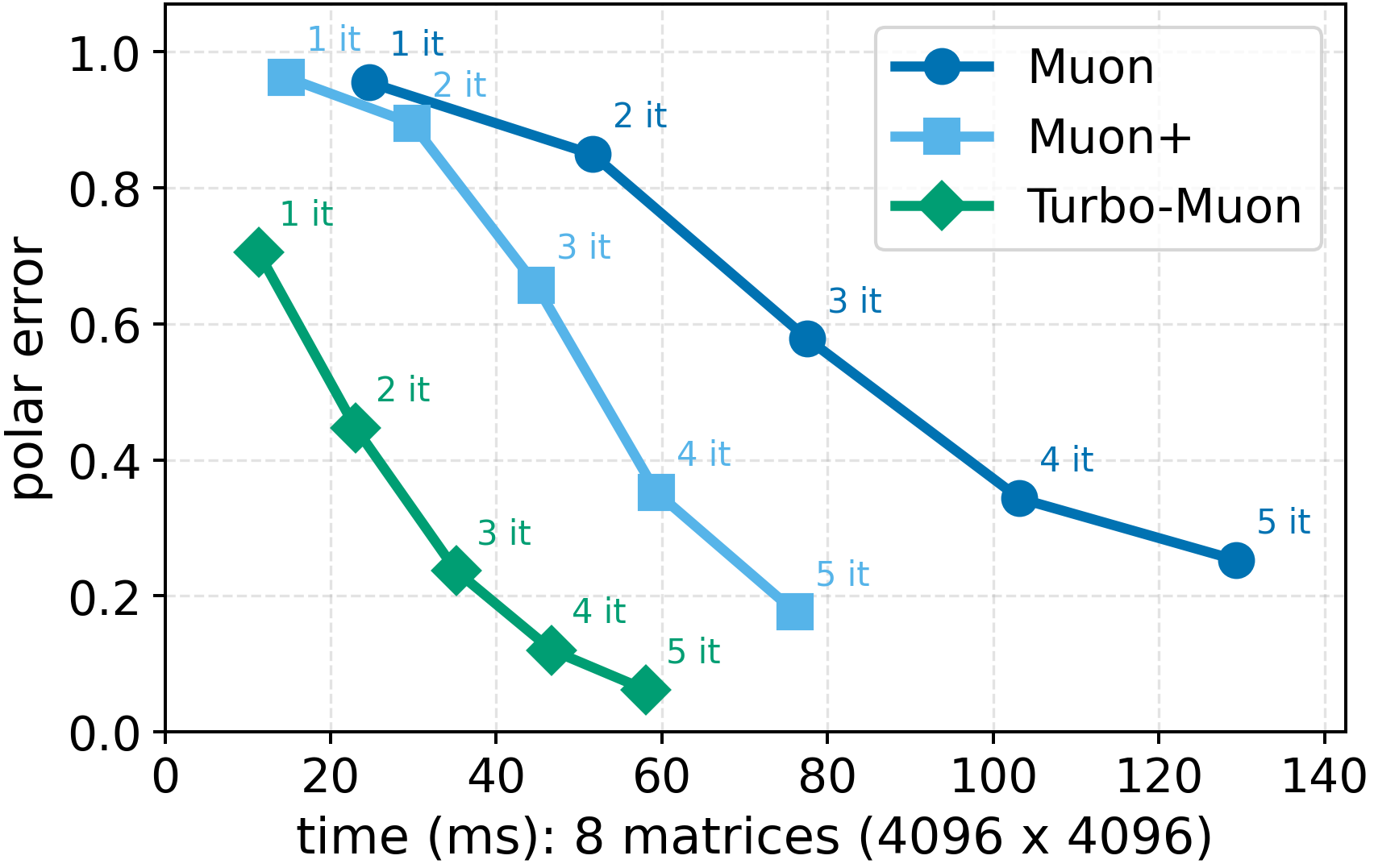}
        
        \caption{Reproduction of \cref{fig:pareto8192} for various matrix sizes. In all settings, \turbomuon outperforms Muon and Muon+ by a significant margin.}
        \label{fig:pareto_all}
    \end{figure*}

\section{Comparison with other pre-conditioning strategies}
\label{ap:other_preconditioning}

AOL preconditioning is one possible way to improve the initialization of the Newton-Schulz iterations, but other normalization strategies may also be considered. As an additional baseline, we evaluate Schatten-$4$ normalization\cite{grishina_accelerating_2025}, which rescales a matrix $X$ by its Schatten-$4$ norm,
\[
    \|X\|_{S_4}
    =
    \left( \sum_i \sigma_i^4 \right)^{1/4}.
\]
This pre-conditioning can also be computed efficiently using the Frobenius norm of $A_0$ (\cref{alg:turbomuon} line~1). Unlike the Frobenius norm, this normalization gives more weight to larger singular values and can therefore provide a stronger control of the spectrum before applying Newton-Schulz.

The results are reported in \cref{fig:schatten4_size,fig:schatten4_iterations}. Schatten-$4$ normalization consistently improves over the standard Frobenius initialization used in Muon. However, AOL preconditioning remains better across matrix sizes and across Newton-Schulz iterations. In particular, AOL reaches lower polar error at each iteration, which is the relevant regime for Muon-style optimizers, where only a small number of Newton--Schulz steps is used in practice.

    \begin{figure*}
        \centering
        \begin{subfigure}{0.49\textwidth}
        \includegraphics[width=\textwidth]{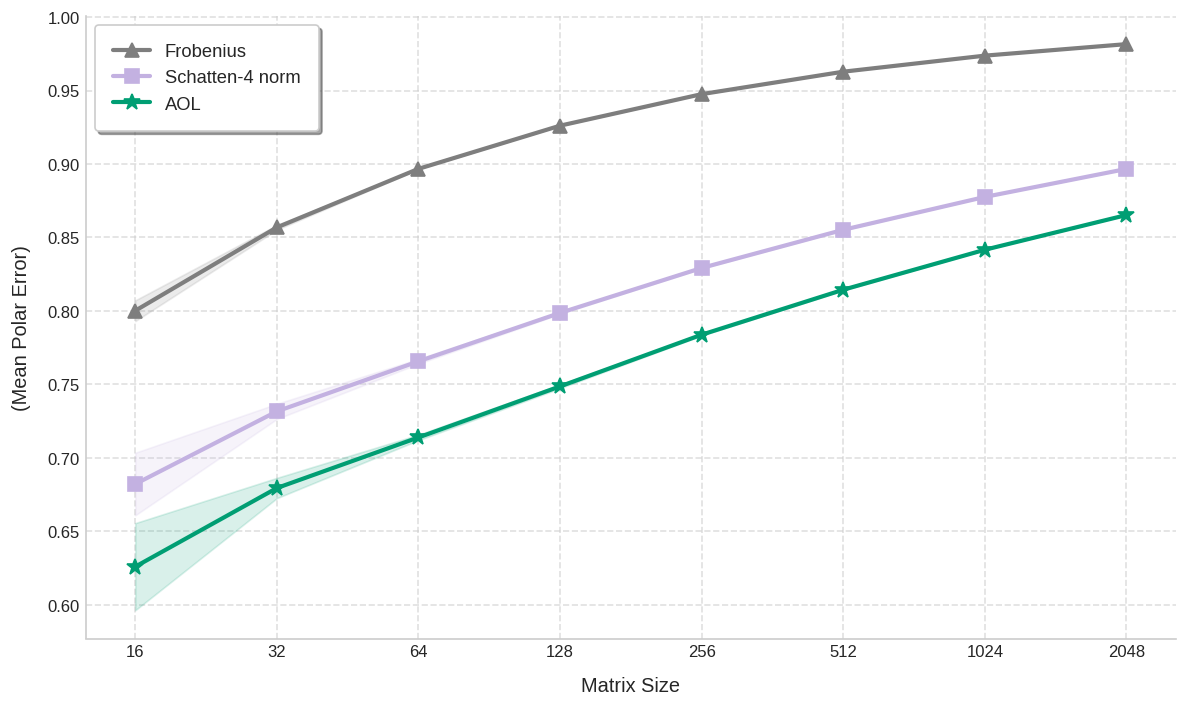}
        \caption{Comparison of several pre-conditioning initializations}
        \label{fig:schatten4_size}
        \end{subfigure}
        \hfill
        \begin{subfigure}{0.49\textwidth}
        \includegraphics[width=\textwidth]{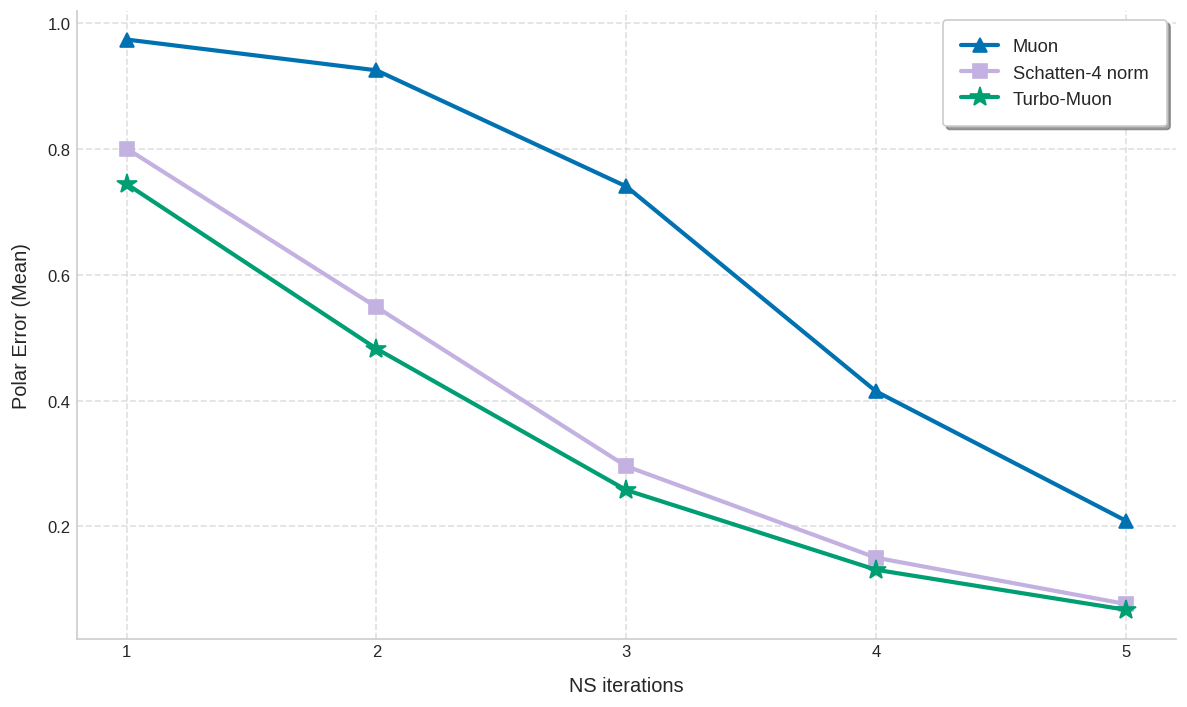}
        \caption{Comparison of Newton-Schulz across iterations with several pre-conditioning initialization}
        \label{fig:schatten4_iterations}
        \end{subfigure}
    \end{figure*}


\section{About the tuning of Newton-Schulz coefficients}
    \label{ap:polar_params}
    
    In this paper, we use dynamic polynomial coefficients to run Newton-Schulz iterations, which means that each step uses different values for $a, b, c$. However, the implementation of Muon+~\cite{ahn_dion_2025} uses coefficients computed explicitly for a fixed number of 5 iterations. To adapt the algorithm to a lower number of iterations, we employed a straightforward strategy as we reduced the number of iterations by retaining only the $n$ last polynomial coefficients. The motivation behind this choice was to isolate the effect of AOL preconditioning as the only factor responsible for the reduction in polar error between Muon+ and \turbomuon. 
    In this section, we demonstrate, through additional experiments, that our strategy is indeed highly effective in regimes with a low number of iterations.
    
    \subsection{Ablation }
    
    The difference between Muon and Muon+ shows the undeniable impact of using explicitly computed polynomial coefficients for a fixed number of steps. However, the improvement between the coefficients introduced by \cite{cesista2025muonoptcoeffs} and the coefficients introduced by \cite{amsel2025polar, grishina_accelerating_2025} is more nuanced, and is mainly observed on heavy-tailed distributions. Yet, the application of those polynomial coefficients to \turbomuon yields a negative impact. We hypothesize the root cause to be due to the assumption about the minimum singular value, required to compute the coefficients. Therefore, we leave the computation of optimal coefficients for \turbomuon as potential future work.
    
    \begin{figure*}[htbp]
        \centering
        \begin{subfigure}{0.32\textwidth}
            \includegraphics[width=\textwidth]{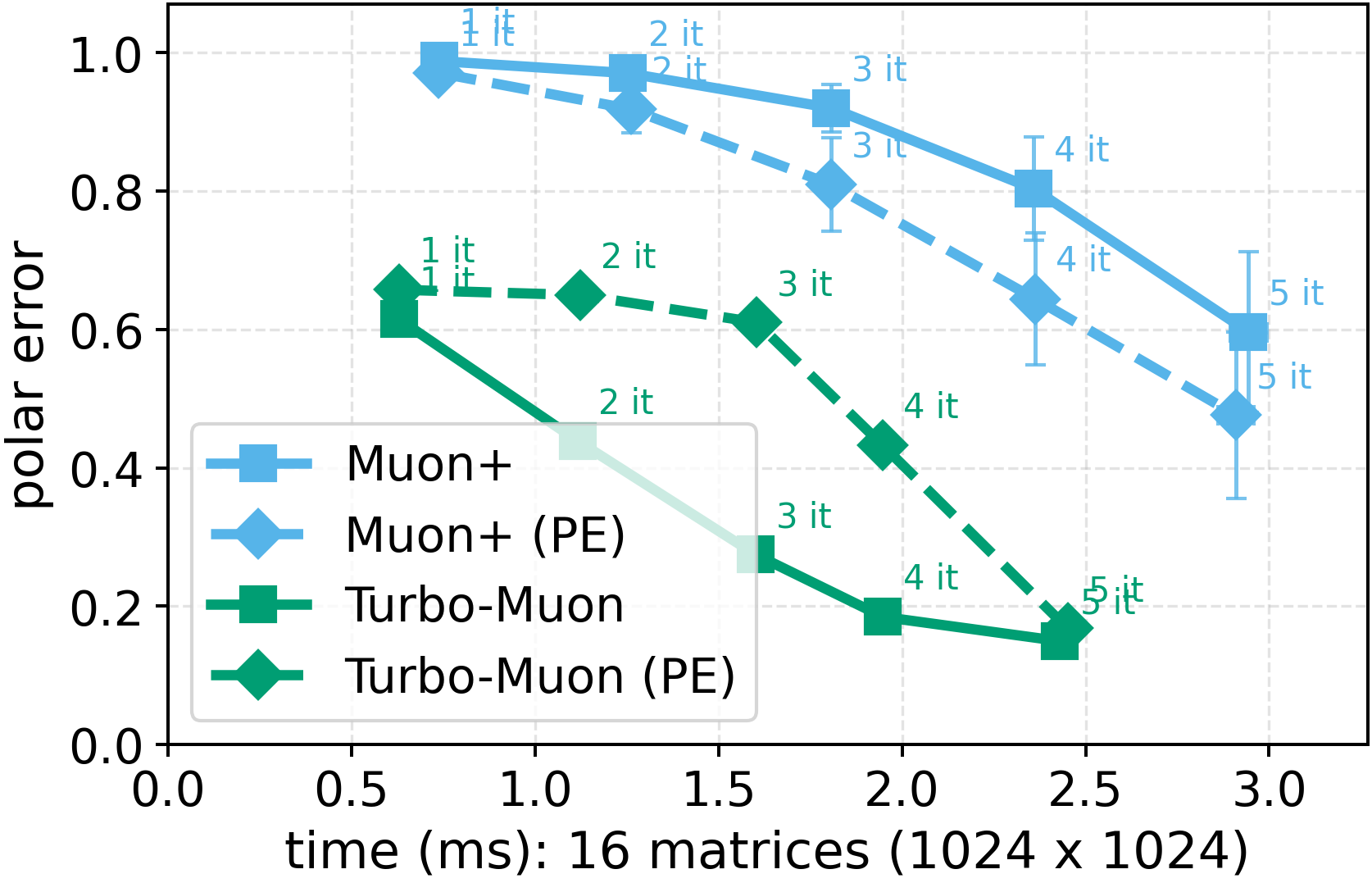}
            \caption{Levy distribution: $\alpha = 1.0$, $\beta = 0$}
        \end{subfigure}
        \hfill
        \begin{subfigure}{0.32\textwidth}
            \includegraphics[width=\textwidth]{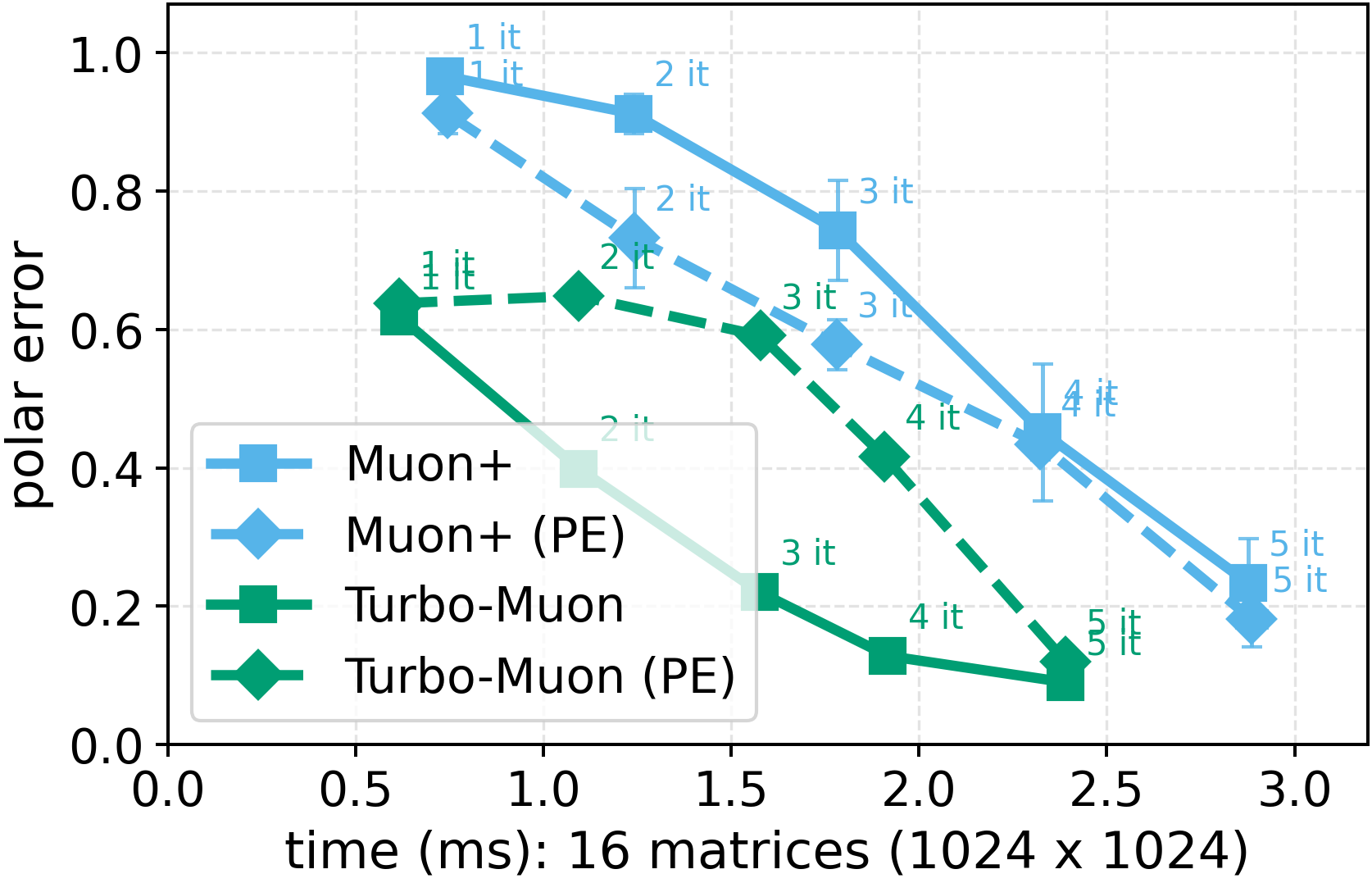}
            \caption{Levy distribution: $\alpha = 1.5$, $\beta = 0$}
        \end{subfigure}
        \hfill
        \begin{subfigure}{0.32\textwidth}
            \includegraphics[width=\textwidth]{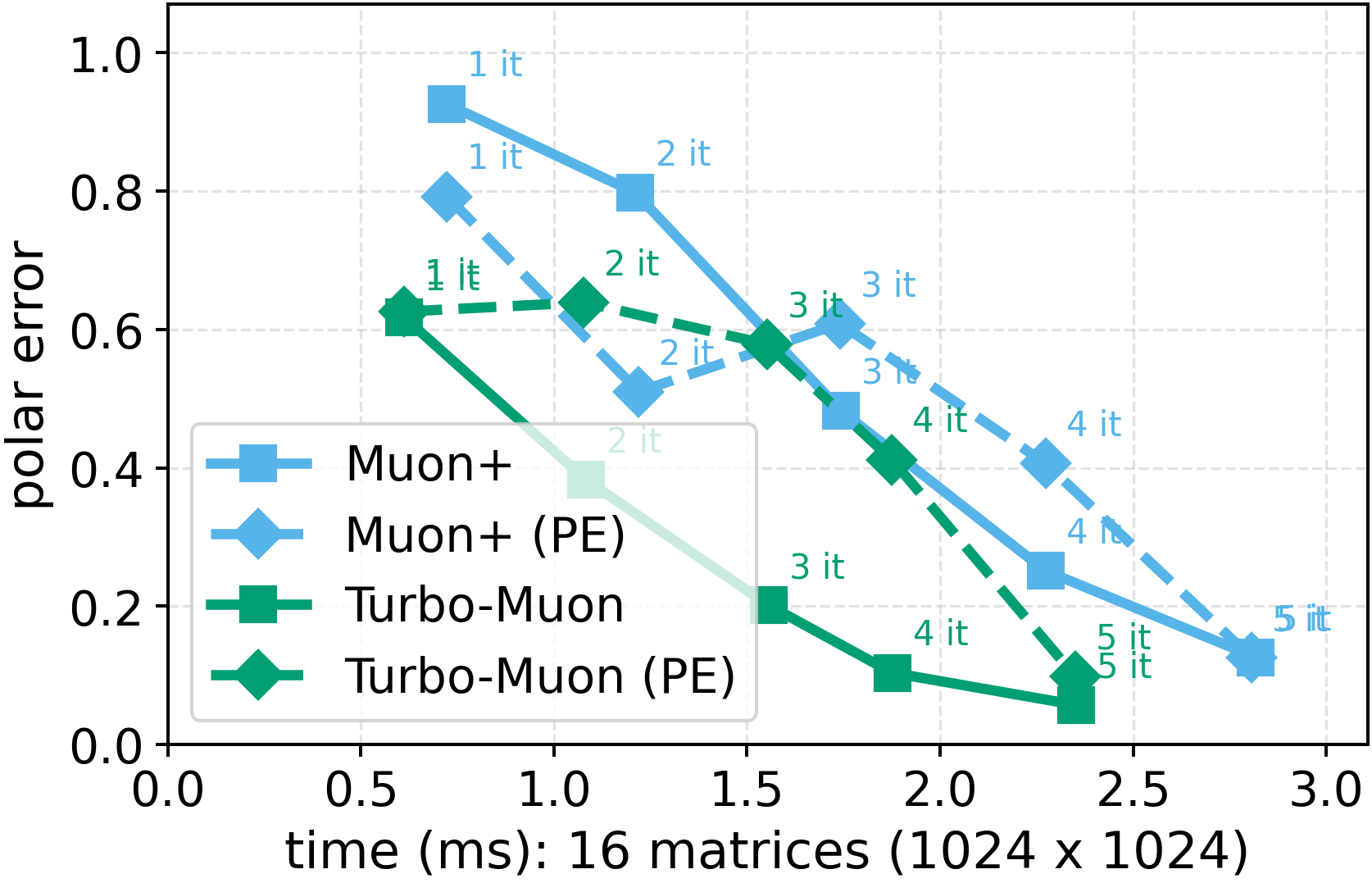}
            \caption{Levy distribution: $\alpha = 2.0$, $\beta = 0$}
        \end{subfigure}
        
        \caption{Coefficients from \cite{amsel2025polar} are suboptimal when combined with AOL-preconditioning. We recomputed the optimal coefficient for each number of iterations, using default parameters ($l=10^{-3}$, $\text{cushion}=0.024$, and $\text{safety\_factor}=2\times10^{-2}$). For both Muon+ and \turbomuon, we report the performance using Polar Express (PE) and compare it with the original. On heavy-tailed distributions, these coefficients improve the performance of Muon+ but degrade the performance of \turbomuon.}
        \label{fig:polar_epxress_ablation}
    \end{figure*}

\section{NanoGPT Speed-Run Details}
\label{ap:nanogpt-WR}

We adopt the Modded-NanoGPT benchmark~\cite{modded_nanogpt_2024}, a highly optimized training setup for a 124M-parameter GPT model on the FineWeb dataset~\cite{penedo2024fineweb}. The goal is to reach a validation cross-entropy of 3.28, corresponding to the estimated performance of GPT-2~\cite{radford2019language}, in minimal wall-clock time. The reference performance is in less than 3 minutes on 8×NVIDIA H100, consuming roughly 0.73B tokens.
This benchmark combines multiple architectural, optimization, and systems-level improvements, including rotary embeddings, QK normalization, ReLU\textsuperscript{2} activations, sliding-window and flash attention, an FP8 language modeling head, and the Muon optimizer. These optimizations are specifically designed to minimize training time and reduce optimizer overhead, making this setting particularly challenging for methods targeting further efficiency gains. Additionally, it places a strong emphasis on reproducible runtime measurement, enabling us to precisely evaluate whether our approach can also yield performance gains in end-to-end training even in highly efficient settings. 
\paragraph{Experimental protocol.} We reuse the most recent public training recipe and modify only the optimizer by replacing Muon with \turbomuon. The number of Newton--Schulz iterations is treated as a configurable parameter, while all other hyperparameters (learning rate, batch size, architecture, and data pipeline) are kept unchanged.
All experiments are conducted on 4$\times$ NVIDIA H100 GPUs. For each configuration, we report the average validation loss and runtime over 10 independent runs to account for runtime variability.
\paragraph{Results.}
As reported in \cref{tab:nanoGPT_accuracy}, \turbomuon achieves equivalent validation performance while requiring one fewer Newton--Schulz iteration. This reduction translates into a consistent decrease in total runtime. In particular, \cref{tab:end2endperf} shows a reduction from $274.75 \pm 0.14$ seconds for the fastest Muon+ configuration to $266.00 \pm 0.10$ seconds when using \turbomuon.



\paragraph{Additional analyses.}
    While the experiment depicted in \cref{tab:nanoGPT_accuracy} compared Muon, Muon+, and \turbomuon, we noted that the nano-GPT speedrun script used the implementation of Muon+ with a different set of NS coefficients from \cite{amsel2025polar, grishina_accelerating_2025} (Depicted as Muon+ (PE) in \cref{ap:polar_params}). Given the additional insights of the experiments of \cref{ap:polar_params}, this choice could be seen as biased in favor of Turbo-Muon. To alleviate any doubts, we reproduced this experiment using the Polar Express factors tested in \cref{ap:polar_params}, which would favor the baseline over \turbomuon.

    In this setting, we compare \turbomuon (PE) with \textbf{four} iterations with Muon+ (PE) with \textbf{five} iterations. In order to remove one iteration, we applied the same strategy that was used between Muon+ and \turbomuon{}, which is discussed in \cref{ap:polar_params}. In order to run this training on 4xH100 instead of 8, we used a gradient accumulation of 2 to match the original results.
    We performed 10 trials in a row in order to measure variance on the final validation loss and the total runtime. This was done for both the baseline and our improved version to obtain comparable results. 

    \begin{table*}[htbp]
        \centering
        \begin{tabular}{llrrrr}
        \toprule
         method & iterations & val loss  & val loss std & runtime & runtime std \\
        \midrule
        Muon+ (PE) & 5it & 3.2774 & 0.0014 & 273.75s & 0.14 \\
        \turbomuon (PE) & 4it & 3.2791 & 0.0013 & 266.65s & 0.10 \\
        \bottomrule
        \end{tabular}
        \caption{Direct comparison against the original training procedure. Results reported for 10 trials for each variant.}
        \label{tab:wr_comp}
    \end{table*}

    Interestingly, we can observe a degradation of only $0.0017$ in the final loss. These results can be observed through the lenses of observations made in \cref{ap:polar_params}: where we observed that factors obtained from \cite{amsel2025polar, grishina_accelerating_2025} underperformed when combined with AOL preconditioning. However, this degradation is smaller than the degradation observed when removing one iteration as done in \cref{tab:nanoGPT_accuracy}.

\section{Runtime influence of the training regime}
\label{ap:trainingregimes}
In practice, the optimizer step accounts for only a small fraction of total training computational cost, which is usually dominated by forward and backward propagation. Nonetheless, Muon’s orthogonalization introduces a measurable overhead compared to AdamW. This overhead decreases as the batch size increases, since Muon’s computational cost is independent of the batch dimension, while the model’s forward cost scales linearly with it. Larger batches, therefore, improve time-to-target performance and typically move Muon past AdamW on the compute-efficiency frontier \cite{shah2025practical}, especially as Muon exhibits a larger critical batch size\cite{wen_fantastic_2025}.

\begin{table*}[t]
        \centering
        \begin{tabular}{lcc}
        \hline
        \textbf{Variant} & \textbf{ms/step} & \textbf{Speedup} \\
        \hline
        \multicolumn{3}{c}{\textbf{Muon variants}} \\
        \hline
        \color{muoncolor}Muon   & $1378.54 \pm 12.35$ & 1.000x \\
        \color{dioncolor}Muon+   & $1308.09 \pm 8.36$  & 1.054x \\
        \color{turbomuoncolor}\turbomuon   & $1261.93 \pm 6.89$  & 1.092x \\
        \hline
        \multicolumn{3}{c}{\textbf{Dion (Muon-derived, large-scale oriented)}} \\
        \hline
        dion-1          & $3477.93 \pm 57.22$ & 0.397x \\
        dion-1/4        & $1678.71 \pm 14.64$ & 0.821x \\
        dion-1/16       & $1229.01 \pm 14.55$ & 1.122x \\
        \hline
        \end{tabular}
        \vspace{1mm}
        \caption{
        \textbf{Turbo-Muon unlocks step-time speedups at medium scale.}
        We report training throughput (ms per optimization step) when simulating the training of a 1.3B-parameter model on an A100 80GB GPU with a batch size of 32k tokens per GPU, a setting where Muon's overhead is clearly exposed, and Dion's low-rank induces a perceptible loss degradation.
        The speedup column indicates performance relative to \texttt{Muon}, with higher values corresponding to faster training.
        }
        \label{tab:1b_runtime}
\end{table*}

\begin{figure*}[t]
    \centering
\end{figure*}

At large model scales, however, global batch size cannot grow indefinitely without sharding, which significantly affects compute efficiency. Naïve replication can increase per-step flops by up to 5×, while more advanced strategies such as sharded matrix multiplication or layer sharding can suffer from bandwidth saturation. The resulting trade-off is tightly linked to the per-device batch size, which limits the amortization of Muon’s overhead \cite{essential2025muon}, leading to an overhead typically around 10-20\%.

Within this context, our preconditioned Newton–Schulz method reduces the cost of the orthogonalization subroutine by approximately 3× at fixed polar accuracy. The resulting end-to-end gains are most pronounced at medium scales—where batch size is reduced but full sharding is unnecessary. This regime is illustrated in \cref{tab:1b_runtime}: On a 1.3B language model, one A100 with 80G memory can achieve a batch size of 32k tokens without sharding. In this context, Muon’s cost is non-negligible, but optimizers tailored for extreme scales, such as Dion\cite{ahn_dion_2025}, are not yet efficient: Dion's low-rank updates--which shine at larger scales--still degrade convergence. Our drop-in preconditioned NS offers a competitive trade-off by reducing computational cost without compromising accuracy.

\section{CIFAR-10 Speed-Run Details}
\label{ap:cifar}

\paragraph{Benchmark description.}
We consider the CIFAR-10 speed-running task~\cite{Jordan2024cifar10airbench}, which aims to reach 94\% validation accuracy on CIFAR-10~\cite{krizhevsky2009learning} in minimal wall-clock time on a single H100 GPU. Unlike NanoGPT, this benchmark involves training a convolutional neural network (CNN).

\paragraph{Optimizer behavior.}
In this setting, Muon reshapes convolutional gradient updates into matrices and applies iterative orthogonalization, providing a complementary regime to evaluate AOL preconditioning.

\paragraph{Results.}
\turbomuon achieves identical accuracy while reducing runtime from $1.359$ s to $1.348$ s, corresponding to a 0.81\% speedup (\cref{tab:end2endperf}).

\paragraph{Discussion.}
Although the absolute gain is modest due to the already optimized implementation, this result confirms that AOL preconditioning consistently improves efficiency across heterogeneous architectures and training setups.

\section{
On the complementarity of  \turbomuon with other Muon improvements
} \label{ap:tridao}

Concurrent with this work, \citet{GramNewtonSchulz} proposed a Gram-based, hardware-aware variant of Newton-Schulz orthogonalization that further reduces the cost of Muon updates. This approach is complementary to our preconditioning strategy, as it modifies the implementation of the Newton-Schulz iterations whereas AOL modifies their initialization. We therefore evaluate their combination in \cref{tab:tridao} evaluating the runtime and residual polar error after five iterations of Newton-Schulz, on random input matrices of size $2048\time 5464$. Gram Newton-Schulz substantially reduces runtime compared to the Muon+ implementation, but slightly increases the polar error. Adding AOL preconditioning recovers part of this loss in approximation quality, while incurring only a modest runtime overhead.

\begin{table}[h]
\centering
\caption{Runtime and polar error of Newton--Schulz variants.}
\label{tab:tridao}
\begin{tabular}{lcc}
\toprule
Variant & Time (ms) & Polar error \\
\midrule
\textcolor{dioncolor}{Muon+}      & 26.031 & $1.7419{\times}10^{-2}$ \\
Gram NS \cite{GramNewtonSchulz}          & 20.621 & $1.7438{\times}10^{-2}$ \\
\textcolor{turbomuoncolor}{Turbo-Muon} + Gram NS    & 22.717 & $1.7431{\times}10^{-2}$ \\
\bottomrule
\end{tabular}
\end{table}

\section{
Beyond Iterative Approximations: On the Asymptotic Behavior of \turbomuon
} \label{ap:bias_analysis}

\begin{figure}
    \centering
    \includegraphics[width=0.45\linewidth]{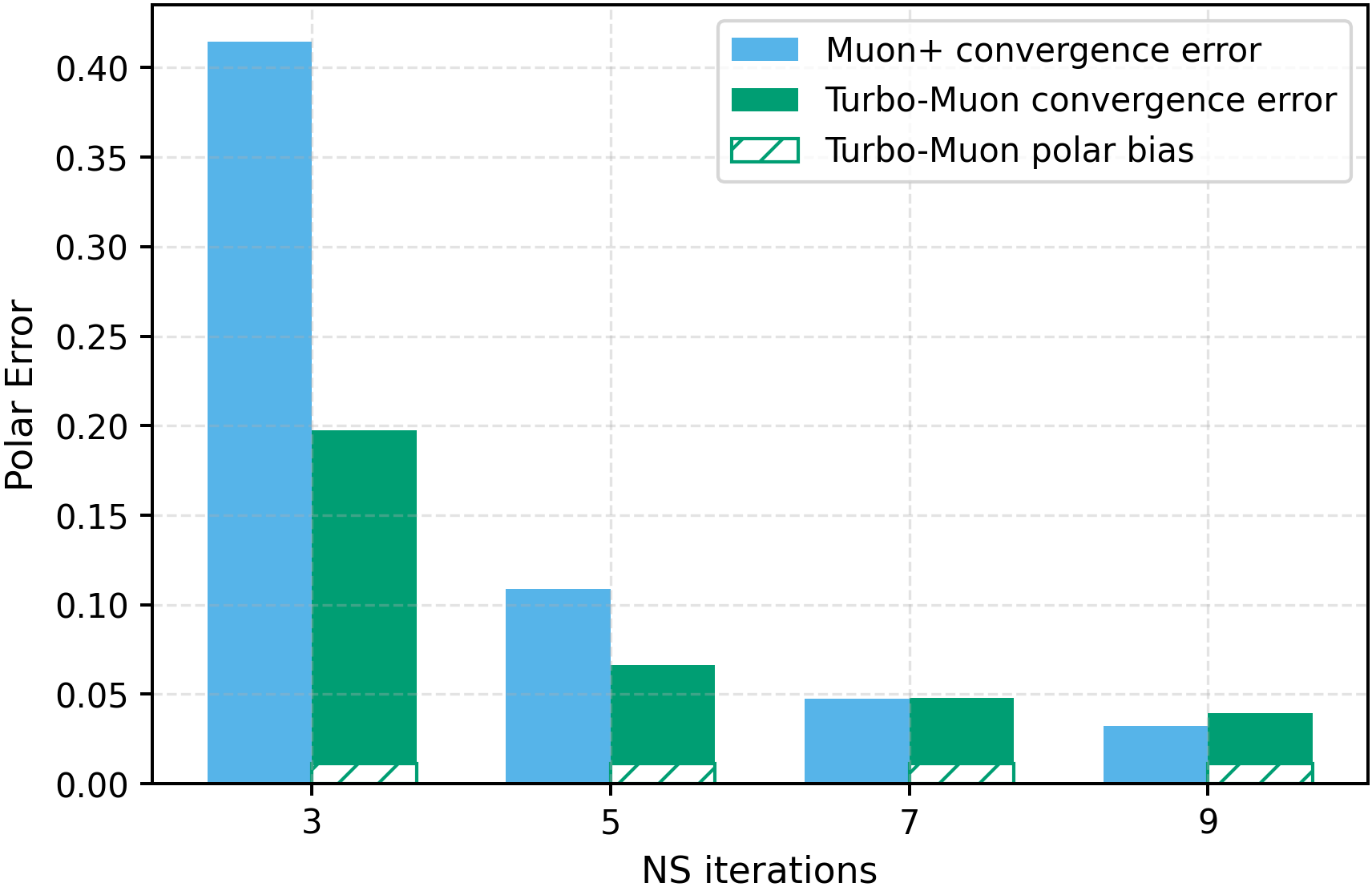}
    \caption{\textbf{Understanding the nature of the remaining polar error.} We decompose the polar error of the \turbomuon algorithm as an approximation error that depends on the number of NS iterations, along with a bias error that is introduced by AOL preconditioning. Results are measured on 100 random normal matrices.}
    \label{fig:bias_converge}
\end{figure}

In the previous sections, we have demonstrated that
$\varepsilon_{\text{polar}}(\operatorname{NS}_t\circ\textrm{AOL}, \mathbf{X}) \lessapprox \varepsilon_{\text{polar}}(\operatorname{NS}_t, \mathbf{X})$
in the widely adopted practical settings where $t \leq 5$. This means that the polar error is of the same order as the usual error made by approximation algorithms. 
However, the nature of this error remains to be determined: When does the remaining polar error come from a lack of orthogonality? or when does it come from a discrepancy between $\textrm{PolarFactor}(\textrm{AOL}(\mathbf{X}))$ and $\textrm{PolarFactor}(\mathbf{X})$ ? 
In the latter case, can this new type of error bias the global optimization process?
In this section, we quantify how AOL preconditioning can introduce a bias and prove that this eventual bias still yields a strict descent direction.

\paragraph{Characterizing the remaining polar error.}
We can quantify the bias error induced by AOL preconditioning, by computing precisely $\operatorname{PolarFactor}(X) = Q$ and $\operatorname{PolarFactor}(AOL(X)) = Q_{aol}$:
\[
    \varepsilon_{\text{bias}}(\mathrm{AOL}, \mathbf{X}) = \frac{\| \textbf{Q} - \textbf{Q}_{aol} \|_F}{\sqrt{n}}
\]
This can be interpreted as the irreducible error that cannot be reduced after full convergence of the Newton-Schulz algorithm with AOL preconditioning, since it converges toward $Q_{aol}$.
Similarly, we can define:
\[
    \varepsilon_{\text{approx}}(\operatorname{NS}_t, \textrm{AOL}(\mathbf{X})) = \frac{\|  \textbf{Q}_{aol} - \operatorname{NS}_t\circ\textrm{AOL}(\mathbf{X}) \|_F}{\sqrt{n}}
\]
Which represents the approximation error between $NS_t(AOL(X))$ and $Q_{aol}$.
These two definitions connect to the polar error with a direct application of the triangle inequality:
\[
    \varepsilon_{\text{polar}}(\operatorname{NS}_t\circ\textrm{AOL}, \mathbf{X}) \leq  
    \underbrace{\varepsilon_{\text{bias}}(\textrm{AOL}, \mathbf{X})}_{\text{irreducible bias}} + 
    \underbrace{\varepsilon_{\text{approx}}(\operatorname{NS}_t, \textrm{AOL}(\mathbf{X}))}_{\text{approximation error}}
\]

This allows us to view the approximation error of the polar factor of X by \turbomuon as a convergence-related error $\varepsilon_\text{approx}$ and a bias-related error $\varepsilon_\text{bias}$ term, which are dependent on the original matrix $\mathbf{X}$. Importantly, we assume that $\lim_{t\to\infty} \varepsilon_\text{approx}(\operatorname{NS}_t, \mathbf{X}) = 0$, as proven in ~\cite{bjorck1971iterative}. To illustrate this phenomenon, in \cref{fig:bias_converge}, we run an experiment where we orthogonalize the same batch of 100 random matrices using Newton-Schulz iterative steps. Since existing methods had coefficients computed for only five iterations, we recomputed those using the method from~\citet{amsel2025polar}. 
For each number of iterations, we quantify $\varepsilon_{\text{bias}}(\textrm{AOL}, \mathbf{X})$ (in dashed green) alongside $\varepsilon_{\text{approx}}(\operatorname{NS}_t, \textrm{AOL}(\mathbf{X}))$ (in dark green), finally, total polar error is compared with the polar error obtained without preconditioning, assuming that Newton-Schulz is perfectly unbiased (in dark blue).
We observe that while \turbomuon is indeed biased, its bias is compensated by the superior approximation speed it confers in practically adopted low $t$ settings. When $t$ is high, the beneficial effects of preconditioning are absorbed while the bias remains. For reference, nine iterations of Newton-Schulz require 27 matrix multiplications in total.

\paragraph{Characterizing AOL preconditioning when $t \rightarrow \infty$.}
We have shown that the bias effect is negligible in the low iteration number of Newton-Schulz usually used in Muon. But we might want to evaluate the effect it could induce in regimes where a high number of Newton-Schulz iterative steps are used. We show that in fact the biased estimation of the polar factor introduced by AOL preconditioning still ensures that we recover a weight update that corresponds to a steepest descent direction. For reference, as explained in~\cite{bernstein2024oldoptimizernewnorm}, 
the steepest descent update in spectral norm is (informally) defined as:
\vspace{1mm}
\begin{equation}
    \arg \min_{\Delta W} \left[ \langle G, \Delta W \rangle + \frac{\lambda}{2} \| \Delta W \|_2^2 \right] = \operatorname{PolarFactor}(G).
\label{eq:spectral_descent}
\end{equation}
\vspace{1mm}

\noindent with $G$ the gradient and $\lambda$ sharpness. Therefore, given that AOL preconditioning can be written as $\operatorname{AOL}(\mathbf{X}) = \mathbf{X}s$ with $s$ as vector with $s_i > 0$,  using Equation~\ref{eq:aol}. We have that the \turbomuon update solves a steepest descent in an induced norm that is dependent of $s$ (c.f. Definition 1 of \citet{bernstein2024oldoptimizernewnorm}).
Moreover, in practice, $s$ forms a vector of strictly positive entries. Therefore, \textit{the weight updates of \turbomuon yield a strict descent direction}. This means that even the worst case of an input that maximizes $\varepsilon_{\text{bias}}$ cannot lead to a divergence of the training process. More details are provided in Appendix~\ref{supp:descent_proof}.

\begin{table}[t]
    \vspace{1mm}
    \centering
    \label{tab:ns-aol}
    \begin{tabular}{cccccc}
    \toprule
    Method & $t_{\text{NS}}$ & Accuracy (\%) & $\varepsilon_\mathrm{approx}$ & $\varepsilon_\mathrm{bias}$ \\
    \midrule
    \turbomuon & 15 & $94.08 \pm 0.1 \%$ & $5.5 \times 10^{-4}$ & $0.1$ \\
    \turbomuon & 25 & $94.07 \pm 0.1 \%$ & $3.6 \times 10^{-4}$ & $0.1$ \\
    \turbomuon & 40 & $ 94.05 \pm 0.1 \%$ & $ 2.1 \times 10^{-4} $ & $0.1$ \\
    \midrule
    Muon & 15 & $94.06 \pm 0.1 \%$ & $1.5 \times 10^{-2}$ & $0.0$ \\
    Muon & 25 & $ 94.06 \pm 0.1 \%$ & $ 1.1 \times 10^{-2} $ & $0.0$ \\
    Muon & 40 & $ 94.04 \pm 0.1 \%$ & $ 7.4 \times 10^{-3} $ & $0.0$ \\
    \bottomrule
    \end{tabular}
    \caption{\textbf{The measured $\varepsilon_\mathrm{bias}$ does not impact final performance:} We measure  the final accuracy (\%) of the trained classifier on the CIFAR-10 dataset with and without AOL preconditioning after $t_{\text{NS}}$ steps. We log the values of $\varepsilon_\mathrm{approx}$ and $\varepsilon_\mathrm{bias}$ once every 10 train batches and report average values across 80 training runs.}
    \label{tab:asymptotic}
\end{table}

\paragraph{Confirming That the Estimation Bias Does Not Affect Training}
In order to empirically validate the theoretical result asserting that this estimation bias should not alter the training convergence, we investigate the training dynamics for an unconventionally high number of Newton-Schulz iterative steps.
We train a neural network on the CIFAR-10 dataset with and without AOL preconditioning. Across each run, we vary the number of iterative NS steps and compute coefficients for the NS iterations according to the number of chosen steps (using the method from~\cite{amsel2025polar}). The high number of steps we choose here leads to an unrealistically long training time, but ensures that the orthogonalized weight updates are extremely close estimates of $\operatorname{PolarFactor}(\mathbf{X})$ and $\operatorname{PolarFactor}(\operatorname{AOL}(\mathbf{X}))$.
Table~\ref{tab:asymptotic} depicts the results for $15\leq t_\mathrm{NS} \leq 40$, and shows that while both \turbomuon and Muon converge toward their respective iterative objective, the bias induced by the AOL-preconditioning has no impact on the quality of training.

Since we have shown in \cref{sec:steepest} that \turbomuon yields steepest descent
directions in an induced norm based on $\operatorname{PolarFactor}(GS)$, 
training a neural network with \turbomuon's weight updates is valid in theory and in practice. Furthermore, usual practical implementations of orthogonality-based optimization rely on a relatively small amount of iterative steps to retrieve an approximation of the closest orthogonal matrix, where AOL preconditioning really shines.

\end{document}